\documentclass{article} 
\usepackage{xy_style}

\usepackage{iclr2024_conference}
\usepackage{times}


\usepackage{amsmath,amsfonts,bm}









\def\eqref#1{equation~\ref{#1}}









\def\1{\bm{1}}










\DeclareMathAlphabet{\mathsfit}{\encodingdefault}{\sfdefault}{m}{sl}
\SetMathAlphabet{\mathsfit}{bold}{\encodingdefault}{\sfdefault}{bx}{n}
















\usepackage{hyperref}
\usepackage{url}
\usepackage{subfloat}
\usepackage{subcaption}
\usepackage{wrapfig}


\usepackage{enumitem}
\setitemize{noitemsep,topsep=0pt,parsep=0pt,partopsep=0pt}

\newcommand{\fh}[1]{}
\newcommand{\fhc}[1]{}
\usepackage{soul}
\newcommand{\fhst}[1]{}

\title{Rethinking Adversarial Policies: A Generalized Attack Formulation and Provable Defense in RL}


\author{Xiangyu Liu$^1$, Souradip Chakraborty${^1}$, Yanchao Sun$^2$, Furong Huang$^1$ \\
$^1$University of Maryland, College Park, $^2$J.P. Morgan AI Research\\
\texttt{\{xyliu999\}@umd.edu}
}

%

\usepackage[toc,page,header]{appendix}
\usepackage{minitoc}

\usepackage{subcaption}

\doparttoc 
\faketableofcontents 

\iclrfinalcopy 

\begin{document}

\maketitle

\begin{abstract}
Most existing works focus on direct perturbations to the victim's state/action or the underlying transition dynamics to demonstrate the vulnerability of reinforcement learning agents to adversarial attacks. 
However, such direct manipulations may not be always realizable.
In this paper, we consider a multi-agent setting where a well-trained victim agent $\nu$ is exploited by an attacker controlling another 
agent $\alpha$ with an \textit{adversarial policy}. Previous models do not account for the possibility that the attacker may only have partial control over 
$\alpha$ or that the attack may produce easily detectable ``abnormal'' behaviors. Furthermore, there is a lack of provably efficient defenses against these adversarial policies. 
To address these limitations, we introduce a generalized attack framework that has the flexibility to model to what extent the adversary is able to control the agent, and allows the attacker to regulate the state distribution shift and produce stealthier adversarial policies. Moreover, we offer a provably efficient defense with polynomial convergence to the most robust victim policy through adversarial training with timescale separation. 
This stands in sharp contrast to supervised learning, where adversarial training typically provides only \textit{empirical} defenses.
Using the Robosumo competition experiments, we show that our generalized attack formulation results in much stealthier adversarial policies when maintaining the same winning rate as baselines. 
Additionally, our adversarial training approach yields stable learning dynamics and less exploitable victim policies.\footnote{Codes are available at \href{https://github.com/xiangyu-liu/Rethinking-Adversarial-Policies-in-RL.git}{https://github.com/xiangyu-liu/Rethinking-Adversarial-Policies-in-RL.git}.}
\end{abstract}

\section{Introduction}
\label{sec:intro}

Despite the huge success of deep reinforcement learning (RL) algorithms across various domains \citep{silver2017mastering,mnih2015human,schulman2015trust}, it has been shown that deep reinforcement learning policies are highly vulnerable to adversarial attacks. 
That is, a well-trained agent can produce wrong decisions under small perturbations, making it risky to deploy RL agents in real-life applications with noise and high stakes.
The most popular attack methods focus on fooling the RL agent by adversarially perturbing the states, actions, or transition dynamics of the victim~\citep{huang2017adversarial,pattanaik2017robust,zhang2020robust,sun2021strongest,tessler2019action}. 

However, in practice, many applications, such as air traffic control systems, are well-protected, meaning that direct perturbations to the observations or actions may not always be feasible. For instance, to perturb the readings of an air traffic control radar, an attacker might need to physically manipulate the sensor or infiltrate the communication system, tasks that can require significant effort. In this context, our paper explores attacks on a victim agent, $\nu$, executed by an attacker controlling another agent, $\alpha$, in the same environment. Specifically, in the air traffic control example, an attacker could manipulate a commercial drone to interfere with the radar system of the victim. The strategy employed by the attacker in this scenario is known as an “\textit{adversarial policy}”.

Previous works \citep{gleave2019adversarial,wu2021adversarial,guo2021adversarial} have adopted principled approaches to attack well-trained RL agents by developing an adversarial policy to directly minimize the expected return of the victim. These methods have effectively defeated state-of-the-art agents trained through self-play \citep{bansal2018emergent}, even when the adversarial policy has been trained for less than 3\% of the self-play training time steps. 
However, existing models do not adequately address situations where the attacker might face resistance and only achieve partial control, which also lead to conspicuous behaviors. Despite the common occurrence of such attacks, provably efficient defenses are not well investigated yet.



To address these issues, our generalized attack formulation introduces an ``attack budget'', effectively capturing the attacker's \textit{partial control}. This metric accurately reflects the attacker's capacity to degrade a victim's performance.
Within this framework, the attacker can self-regulate the attack budget, aligning with state distributions \citep{gleave2019adversarial} and marginalized transition dynamics \citep{russo2021balancing, franzmeyer2022illusionary}  to craft stealthier, less detectable attacks. Notably, our attack model extends single-agent action adversarial RL \citep{tessler2019action} to multi-agent setting.
On the defense side, merely retraining the victim agent against specific strong attacks may not necessarily improve overall robustness; in some cases, it could even worsen performance against other potential attacks. 
We propose an adversarial training algorithm featuring \textit{timescale separation}, which avoids overfitting to specific attacks and focuses on optimizing the agent's worst-case performance. 
Unlike existing methods of timescale separation in GANs and adversarial training in supervised learning, which may converge to local solutions \citep{heusel2017gans} or provide only empirical defenses \citep{madry2017towards,shafahi2019adversarial}, our algorithm converges to the most robust policy globally, offering defenses with provably efficient guarantees --- even in the face of the problem's non-convexity and non-concavity.

Our key contributions in the realm of both attack and defense are summarized as follows.
    \textbf{(1)} 
   We introduce a generalized attack formulation that captures the ``partial control'' of the attacker. This formulation allows for stealthier attacks and extends the concept of action adversarial RL to more generalized settings.
    \textbf{(2)} 
    We address the issue of non-convergence in adversarial training within our attack framework. By incorporating the principle of timescale separation, we achieve provable defenses and ensure theoretical guarantees for convergence to the globally most robust policy.
    \textbf{(3)} 
     Empirical results affirm the efficacy of our generalized attack formulation in minimizing state distribution shifts and generating stealthier behaviors, compared to baseline unconstrained methods. Additionally, in tasks like Kuhn Poker and Robosumo, our timescale-separated adversarial training demonstrates superior stability and robustness when compared to popular baselines, including single-timescale adversarial training, self-play, 
     and fictitious-play.


\vspace{-1em}
\section{Preliminaries}
\label{sec:prelim}
\vspace{-1em}

The extension of Markov decision processes (MDPs) with more than one agent is commonly modeled as Markov games~\citep{littman1994markov}. A Markov game with $N$ agents is defined by a tuple $\cG = <N, \mathcal{S}, \{\mathcal{A}_{i}\}_{i=1}^{N}, P, \{r_i\}_{i=1}^{N}, \rho, \gamma>$, where $\mathcal{S}$ denotes the state space and $\mathcal{A}_i$ is the action space for agent $i$. The function $P$ controls the state transitions by the current state and one action from each agent: $P: \mathcal{S}\times \mathcal{A}_1 \times \dots \times \mathcal{A}_N\rightarrow \Delta(\mathcal{S})$, where $\Delta(\mathcal{S})$ denotes the set of probability distributions over the state space $\mathcal{S}$. Given the current state $s_t$ and the joint action $(a_1, \dots, a_N)$, the transition probability to $s_{t+1}$ is given by $P(s_{t+1}|s_t, a_1, \dots , a_{N})$. The initial state is sampled from the initial state distribution $\rho \in \Delta(S)$. Each agent $i$ also has an associated reward function $r_i: \mathcal{S}\times \mathcal{A}_1\times \dots \times \mathcal{A}_N\rightarrow [0, 1]$, whose goal is to maximize the $\gamma$-discounted expected return $\mathbb{E}[\sum_{t=0}^{\infty} \gamma^{t}r_{i}(s_t, a_i^{t}, a_{-i}^{t})]$, where $-i$ is a compact representation of all complementary agents of $i$. 

In Markov games, each agent is equipped with a policy $\pi_i: \mathcal{S}\rightarrow \Delta(\mathcal{A}_i)$ in policy class $\Pi_{i}$ and the joint policy is defined as $\bm{\pi}(\mathbf{a}|s) = \Pi_{i=1}^{N}\pi_{i}(a_i|s)$. 
Specifically, the value function for the victim agent $\nu$
given joint policy $(\pi_\nu, \pi_\alpha)$ is defined by
$
    V_{s}(\pi_\nu, \pi_\alpha) = \mathbb{E}_{\pi_\nu, \pi_\alpha}\left[\sum_{t=0}^{\infty}\gamma^{t}r_{\nu}(s_t, \mathbf{a}_t)\mid s_0=s \right],
$
where agent $\nu$ attempts to maximize the value function and attacker aims to minimizes it. 
We abuse the notation to use $V_{\rho}(\pi_\nu, \pi_\alpha):= \mathbb{E}_{s\sim \rho}[V_{s}(\pi_\nu, \pi_\alpha)]$. 
We further define state visitation, which reflects how often the policy visits different states in the state space. 
\vspace{-0.5em}
\begin{definition}
(Stationary State Visitation)\label{def:state_dis} Let $d_\rho^{\bm{\pi}}\in \Delta(\mathcal{S})$ denote the normalized distribution of state visitation by following the joint policy $\bm{\pi}$ in the environment:
$d_\rho^{\bm{\pi}}(s) = (1-\gamma)\EE_{s_0\sim \rho}\sum_{t=0}^{\infty}\gamma^{t}P^{\bm{\pi}}(s_t=s|s_0)$.
\end{definition}
\vspace{-0.5em}



\vspace{-1em}
\section{A generalized attack formulation}
\vspace{-0.5em}

\textbf{Problem description.}  For simplicity, we consider a multi-agent system with two agents, $\nu$ and $\alpha$, following policies $\hat{\pi}_\nu$ and $\hat{\pi}_\alpha$ respectively. The interactions between these agents can be cooperative, competitive, or mixed. As motivated earlier, we consider the attack scenario as described in \cite{gleave2019adversarial, wu2021adversarial, guo2021adversarial}, where the threat comes from an attacker controlling agent $\alpha$. This attacker deviates from $\hat{\pi}_\alpha$ to an adversarial policy $\Tilde{\pi}_\alpha$, aiming to minimize the performance of the victim agent $\nu$. Correspondingly, the victim's goal is to develop a more robust policy $\pi_\nu$ in anticipation of such adversarial policies. The interaction between the attacker and the victim can thus be modeled as a zero-sum game, regardless of the initial relationship between the two agents.\footnote{Even if $\hat{\pi}_\nu$ and $\hat{\pi}_\alpha$ are trained competitively, successful attacks can still occur due to sub-optimality of training \cite{gleave2019adversarial,bansal2017emergent}.} 
This framework can also be extended to settings with more than two agents, where the attacker controls multiple agents and adopts a joint adversarial policy.

\textbf{Attack formulation.} Although such attacks can effectively exploit the victim, in many practical scenarios, unlike the attacks in \cite{gleave2019adversarial, wu2021adversarial, guo2021adversarial}, the attacker may face resistance and achieve only {\it partial} control of agent $\alpha$, e.g., in a hijack scenario. Therefore, we propose a more generalized attack framework. Here, the attacker aims to manipulate agent $\alpha$ using an adversarial policy $\Tilde{\pi}_\alpha$, but may not fully control the agent, which can still follow its original benign policy $\hat{\pi}_\alpha$ with probability $1-\epsilon_\pi$ at each time step, where $\epsilon_\pi\in [0, 1]$. Formally, under these conditions, the attacker solves the following attack objective:
\begin{align}
\label{eq:obj_new}&\min_{\Tilde{\pi}_{\alpha}}V_{\rho}(\hat{\pi}_{\nu}, \pi_{\alpha})\\
\label{eq:divergence_new}&\text{s.t.}\quad \pi_\alpha(\cdot\given s) = (1-\epsilon_\pi)\hat{\pi}_\alpha(\cdot\given s) + \epsilon_\pi \Tilde{\pi}_\alpha (\cdot\given s), \quad\forall s\in \cS.
\end{align}

Objective \ref{eq:obj_new} is a standard attack objective \citep{gleave2019adversarial,guo2021adversarial,wu2021adversarial}, focused on minimizing the value of the victim $\nu$. The additional constraint \ref{eq:divergence_new} captures the probability $\epsilon_\pi$ at which the attacker can control agent $\alpha$. When $\epsilon_\pi = 1$, the setting degenerates to full control, aligning with \cite{gleave2019adversarial,guo2021adversarial,wu2021adversarial}. Conversely, at $\epsilon_\pi = 0$, no attack occurs. This probability, denoted as the attack budget, effectively models the resistance encountered by the attacker. Its suitability as a budget will be further connected to action adversarial RL later \citep{tessler2019action}.

\textbf{(a) Effects of the attack budget.}
Our proposed attack budget effectively characterizes the victim's performance degradation, serving as a viable measure of agent vulnerability. To substantiate this, we note a key observation related to a standard discrepancy measure \citep{kakade2002approximately, schulman2015trust}. 
Given the constraint in Equation \ref{eq:divergence_new}, for any $\Tilde{\pi}_\alpha$, the inequality $D_{\operatorname{TV}}^{\max}(\pi_{\alpha}||\hat{\pi}_{\alpha})\le \epsilon_\pi$ holds.
Here, $D_{\operatorname{TV}}^{\max}(\pi_\alpha||\hat{\pi}_\alpha)$ is defined as $\max_{s}D_{\operatorname{TV}}(\pi_\alpha(\cdot|s)||\hat{\pi}_\alpha(\cdot|s))$, and $D_{\operatorname{TV}}(p||q) := \frac{1}{2}\sum_i |p_i-q_i|$. This observation allows us to establish an upper bound on the victim's performance under an attack budget of $\epsilon_\pi$.
\begin{proposition}
[\textit{Bounded policy discrepancy induces bounded value discrepancy}]\label{theorem:1}
For two policy pairs $(\hat{\pi}_{\nu}, \hat{\pi}_{\alpha})$ and $(\hat{\pi}_{\nu}, \pi_{\alpha})$ such that $D_{\operatorname{TV}}^{\max}(\pi_{\alpha}||\hat{\pi}_{\alpha})\le \epsilon_\pi$, the difference between the victim value can be bounded as:
$
    |V_{\rho}(\hat{\pi}_\nu, \hat{\pi}_\alpha) - V_{\rho}(\hat{\pi}_\nu, \pi_\alpha)|\le\frac{2\epsilon_\pi}{(1-\gamma)^2}.
$
\end{proposition}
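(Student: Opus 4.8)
The plan is to first reduce the statement — which constrains only the attacker's marginal policy — to a statement about the joint policies, and then invoke a standard value-difference argument. Write $\bm{\pi} := (\hat{\pi}_\nu, \hat{\pi}_\alpha)$ and $\bm{\pi}' := (\hat{\pi}_\nu, \pi_\alpha)$ for the two joint policies. Since the victim component $\hat{\pi}_\nu$ is identical in both, a direct computation using $\bm{\pi}(\mathbf{a}|s) = \hat{\pi}_\nu(a_\nu|s)\hat{\pi}_\alpha(a_\alpha|s)$ shows that the joint total-variation distance collapses to the attacker's: for every $s$,
\[
D_{\operatorname{TV}}\big(\bm{\pi}'(\cdot|s)\,\|\,\bm{\pi}(\cdot|s)\big) = D_{\operatorname{TV}}\big(\pi_\alpha(\cdot|s)\,\|\,\hat{\pi}_\alpha(\cdot|s)\big) \le \epsilon_\pi,
\]
because the shared factor $\hat{\pi}_\nu$ sums to one. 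This is the conceptual crux: the problem is now ``how much can the victim's value move when the acting joint policy is perturbed by at most $\epsilon_\pi$ in TV at every state,'' with the multi-agent structure entirely absorbed.

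Next I would set up a telescoping recursion on the value functions. Using the Bellman equation $V^{\bm{\pi}}(s) = \mathbb{E}_{\mathbf{a}\sim\bm{\pi}(\cdot|s)}[r_\nu(s,\mathbf{a}) + \gamma\,\mathbb{E}_{s'\sim P}V^{\bm{\pi}}(s')]$ and the analogous one for $\bm{\pi}'$, I would add and subtract the intermediate quantity $\mathbb{E}_{\mathbf{a}\sim\bm{\pi}'(\cdot|s)}[r_\nu + \gamma P V^{\bm{\pi}}]$ to split $V^{\bm{\pi}}(s) - V^{\bm{\pi}'}(s)$ into a ``policy-change'' term, the difference of the expectation of the fixed function $f(s,\mathbf{a}) = r_\nu(s,\mathbf{a}) + \gamma\,\mathbb{E}_{s'}V^{\bm{\pi}}(s')$ under $\bm{\pi}(\cdot|s)$ versus $\bm{\pi}'(\cdot|s)$, and a ``propagation'' term $\gamma\,\mathbb{E}_{\mathbf{a}\sim\bm{\pi}'}[P(V^{\bm{\pi}} - V^{\bm{\pi}'})]$.

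For the policy-change term I would use $r_\nu \in [0,1]$ and $V^{\bm{\pi}} \in [0, \tfrac{1}{1-\gamma}]$, so $\|f\|_\infty \le \tfrac{1}{1-\gamma}$, together with the elementary inequality $|\mathbb{E}_p f - \mathbb{E}_q f| \le 2\|f\|_\infty D_{\operatorname{TV}}(p\|q)$ and the bound from the first step; this gives a per-state contribution at most $\tfrac{2\epsilon_\pi}{1-\gamma}$. The propagation term is bounded in sup-norm by $\gamma\|V^{\bm{\pi}} - V^{\bm{\pi}'}\|_\infty$. Taking the supremum over $s$ yields $\|V^{\bm{\pi}} - V^{\bm{\pi}'}\|_\infty \le \tfrac{2\epsilon_\pi}{1-\gamma} + \gamma\|V^{\bm{\pi}} - V^{\bm{\pi}'}\|_\infty$, and solving this recursion gives $\|V^{\bm{\pi}} - V^{\bm{\pi}'}\|_\infty \le \tfrac{2\epsilon_\pi}{(1-\gamma)^2}$. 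Since $V_\rho = \mathbb{E}_{s\sim\rho}[V_s]$ is an average of per-state values, the same bound holds for $|V_{\rho}(\hat{\pi}_\nu,\hat{\pi}_\alpha) - V_{\rho}(\hat{\pi}_\nu,\pi_\alpha)|$, as claimed.

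I expect the only subtle point to be bookkeeping the constants to land exactly on the factor $2$: using the loose bound $|\mathbb{E}_p f - \mathbb{E}_q f| \le 2\|f\|_\infty D_{\operatorname{TV}}$ (rather than the sharper range-based bound $(\max f - \min f)\,D_{\operatorname{TV}}$, which would shave off the $2$) is what produces the stated $\tfrac{2\epsilon_\pi}{(1-\gamma)^2}$; everything else, namely the TV reduction and the geometric-series solve, is routine. An alternative route is the performance-difference lemma together with the state-visitation distribution $d_\rho^{\bm{\pi}}$ from Definition \ref{def:state_dis}, bounding advantages by $\tfrac{1}{1-\gamma}$ and the visitation-weighted TV by $\epsilon_\pi$; this gives the same constant but requires the extra step of controlling the state-distribution shift, so I would prefer the self-contained Bellman recursion above.
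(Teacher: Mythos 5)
Your proof is correct, but it takes a genuinely different route from the paper's. The paper first establishes Proposition \ref{theorem:2_new} by writing $d_\rho^{\bm{\pi}} = (1-\gamma)(I-\gamma P_{\bm{\pi}})^{-1}\rho$ and bounding $\|P_{\bm{\pi}} - P_{\bm{\pi}'}\|_1 \le 2\epsilon_\pi$ via a matrix-perturbation identity for the resolvent, then proves Proposition \ref{theorem:1} by expressing $V_\rho(\bm{\pi}) = \tfrac{1}{1-\gamma}\langle d_\rho^{\bm{\pi}}, r_{\bm{\pi}}\rangle$ and splitting the difference into a marginalized-reward term (bounded by $\|d_\rho^{\bm{\pi}}\|_1\|r_{\bm{\pi}} - r_{\bm{\pi}'}\|_\infty \le 2\epsilon_\pi$) and a visitation-shift term (bounded by $\|d_\rho^{\bm{\pi}} - d_\rho^{\bm{\pi}'}\|_1\|r_{\bm{\pi}'}\|_\infty \le \tfrac{2\gamma\epsilon_\pi}{1-\gamma}$), which sum to the stated $\tfrac{2\epsilon_\pi}{(1-\gamma)^2}$ --- essentially the ``alternative route'' you sketch in your last paragraph. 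Your main argument instead telescopes the Bellman equations and solves a sup-norm contraction, which is entirely self-contained (no resolvent manipulations, no need for Proposition \ref{theorem:2_new} as a prerequisite) and in fact delivers the stronger per-state bound $\|V^{\bm{\pi}} - V^{\bm{\pi}'}\|_\infty \le \tfrac{2\epsilon_\pi}{(1-\gamma)^2}$ rather than only the $\rho$-averaged statement; your opening reduction showing that the joint TV distance collapses to the attacker's TV distance is also a clean way to absorb the multi-agent structure, which the paper handles implicitly inside its $\|P_{\bm{\pi}} - P_{\bm{\pi}'}\|_1$ computation. What the paper's route buys in exchange is economy: it needs the visitation-shift bound anyway for the stealthiness claim of Proposition \ref{theorem:2_new}, so deriving the value bound as a corollary reuses that machinery. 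Both arguments land on the same constant, and your observation that the factor of $2$ comes from the loose form of the TV inequality is accurate.
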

\vspace{-1em}
This establishes a link between the attack budget and $|V_{\rho}(\hat{\pi}_\nu, \hat{\pi}_\alpha) - V_{\rho}(\hat{\pi}_\nu, \pi_\alpha)|$.
Specifically, the value function inherently satisfies a global Lipschitz condition. 
This implies that the attacker needs a sufficiently large attack budget $\epsilon_\pi$
to cause significant degradation in performance. 
This contrasts with supervised learning attacks \textcolor{black}{at test time}, where small perturbations can result in large performance shifts. Although this is a worst-case upper bound and may not be tight --- especially when 
$\gamma$ is close to 1 --- it still indicates that a longer effective game horizon grants the attacker greater capacity to degrade the victim's performance.

\textbf{(b) Attack model's stealthiness and detectability.} 
While the unconstrained attack in \cite{gleave2019adversarial} significantly impairs the victim's performance, its overt nature makes it easy to detect even through static images. This deviates from the stealthy ethos of adversarial attacks in supervised learning \citep{goodfellow2014explaining}. In contrast, our generalized attack framework allows for partial control of an agent and enables stealthier attacks by regulating the attack budget $\epsilon_\pi$.
Specifically, leveraging insights that static images alone can reveal attacks, we use generative modeling techniques for distribution matching to align the state distributions $d_\rho^{\hat{\pi}_{\nu}, \hat{\pi}_{\alpha}}$ and $ d_\rho^{\hat{\pi}_{\nu}, \pi_{\alpha}}$ induced by $(\hat{\pi}_\nu, \hat{\pi}_\alpha)$ and $(\hat{\pi}_\nu, {\pi}_\alpha)$ respectively.
Though exact state visitation is difficult to compute, regulating $\epsilon_\pi$
  allows us manage the discrepancy between these distributions. We adopt total variation distance as our measure of discrepancy, offering the following guarantees.
\begin{proposition}[\textit{Bounded policy discrepancy induces bounded state distribution discrepancy}]\label{theorem:2_new}
Fix any $\epsilon_\pi\in [0, 1]$. For two policy pairs $(\hat{\pi}_{\nu}, \hat{\pi}_{\alpha})$ and $(\hat{\pi}_{\nu}, \pi_{\alpha})$ such that $D_{\operatorname{TV}}^{\max}(\pi_{\alpha}||\hat{\pi}_{\alpha})\le \epsilon_\pi$, the discrepancy between the state distributions can be bounded as:
$
 ||d_\rho^{\hat{\pi}_{\nu}, \hat{\pi}_{\alpha}}-d_\rho^{\hat{\pi}_{\nu}, \pi_{\alpha}}||_{1}\le \frac{2\gamma\epsilon_\pi}{1-\gamma}.
$
\end{proposition}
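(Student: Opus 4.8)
The plan is to reduce the claim to a one-step comparison of the state-transition kernels induced by the two joint policies, and then propagate that comparison across the whole discounted horizon via a telescoping argument. Write $\bm{\pi} = (\hat{\pi}_\nu, \hat{\pi}_\alpha)$ and $\bm{\pi}' = (\hat{\pi}_\nu, \pi_\alpha)$, and let $P^{\bm{\pi}}(s'\mid s) = \sum_{a_\nu, a_\alpha}\hat{\pi}_\nu(a_\nu\mid s)\hat{\pi}_\alpha(a_\alpha\mid s)P(s'\mid s, a_\nu, a_\alpha)$ denote the Markov kernel on $\mathcal{S}$ induced by $\bm{\pi}$ (and $P^{\bm{\pi}'}$ the one induced by $\bm{\pi}'$, with $\pi_\alpha$ in place of $\hat{\pi}_\alpha$). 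By Definition \ref{def:state_dis}, each visitation distribution is the discounted geometric mixture $d_\rho^{\bm{\pi}} = (1-\gamma)\sum_{t=0}^{\infty}\gamma^{t}\,\rho\,(P^{\bm{\pi}})^{t}$, where distributions are treated as row vectors acted on from the right by the kernel.

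First I would bound the single-step discrepancy $\|P^{\bm{\pi}}(\cdot\mid s) - P^{\bm{\pi}'}(\cdot\mid s)\|_{1}$ uniformly in $s$. Because the two joint policies share the victim component $\hat{\pi}_\nu$, their difference factors through $\hat{\pi}_\alpha - \pi_\alpha$ alone; summing over $s'$ and using that $P(\cdot\mid s, a_\nu, a_\alpha)$ and $\hat{\pi}_\nu(\cdot\mid s)$ are probability distributions collapses the bound to $\sum_{a_\alpha}|\hat{\pi}_\alpha(a_\alpha\mid s) - \pi_\alpha(a_\alpha\mid s)| = 2\,D_{\operatorname{TV}}(\pi_\alpha(\cdot\mid s)\,\|\,\hat{\pi}_\alpha(\cdot\mid s)) \le 2\epsilon_\pi$, where the last step is exactly the hypothesis $D_{\operatorname{TV}}^{\max}(\pi_\alpha\,\|\,\hat{\pi}_\alpha)\le \epsilon_\pi$.

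Next I would convert this per-state bound into a bound on $\|\rho\big((P^{\bm{\pi}})^{t} - (P^{\bm{\pi}'})^{t}\big)\|_{1}$ via the telescoping identity $(P^{\bm{\pi}})^{t} - (P^{\bm{\pi}'})^{t} = \sum_{k=0}^{t-1}(P^{\bm{\pi}})^{k}\,(P^{\bm{\pi}} - P^{\bm{\pi}'})\,(P^{\bm{\pi}'})^{t-1-k}$. Two standard facts make this go through: right-multiplication by a stochastic kernel is an $L^1$ non-expansion on signed measures, and $\rho(P^{\bm{\pi}})^{k}$ is itself a probability distribution. Hence each of the $t$ summands has $L^1$ norm at most $\max_{s}\|P^{\bm{\pi}}(\cdot\mid s) - P^{\bm{\pi}'}(\cdot\mid s)\|_1 \le 2\epsilon_\pi$, giving $\|\rho((P^{\bm{\pi}})^{t} - (P^{\bm{\pi}'})^{t})\|_1 \le 2t\epsilon_\pi$. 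Substituting into the geometric mixture and summing yields $\|d_\rho^{\bm{\pi}} - d_\rho^{\bm{\pi}'}\|_1 \le 2\epsilon_\pi(1-\gamma)\sum_{t=0}^{\infty}t\gamma^{t} = 2\epsilon_\pi(1-\gamma)\cdot\frac{\gamma}{(1-\gamma)^2} = \frac{2\gamma\epsilon_\pi}{1-\gamma}$, as claimed.

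I do not expect a genuine obstacle, since this is a simulation-lemma-style argument, but the step requiring the most care is the telescoping decomposition together with the contraction bookkeeping: one must verify that the intermediate vectors $\rho(P^{\bm{\pi}})^{k}$ remain probability distributions, so that the per-state total-variation bound can legitimately be averaged against them, and that the trailing factors $(P^{\bm{\pi}'})^{t-1-k}$ can only decrease the $L^1$ norm. The only quantitative input beyond that is the closed form $\sum_{t\ge 0} t\gamma^{t} = \gamma/(1-\gamma)^2$, which contributes the extra factor of $\gamma$ distinguishing this bound from the value bound of Proposition \ref{theorem:1}.
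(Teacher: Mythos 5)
Your proof is correct and is essentially the paper's argument in series form: the paper establishes the same one-step bound $\|P_{\bm{\pi}}-P_{\bm{\pi}'}\|_1\le 2\epsilon_\pi$ and then applies the resolvent identity $(I-\gamma P_{\bm{\pi}})^{-1}-(I-\gamma P_{\bm{\pi}'})^{-1}=(I-\gamma P_{\bm{\pi}})^{-1}\gamma(P_{\bm{\pi}}-P_{\bm{\pi}'})(I-\gamma P_{\bm{\pi}'})^{-1}$ together with $\|(I-\gamma P)^{-1}\|_1\le \frac{1}{1-\gamma}$, which upon Neumann-series expansion is exactly your telescoping decomposition and the sum $\sum_t t\gamma^t=\gamma/(1-\gamma)^2$. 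No gap; the two presentations yield identical bookkeeping and the same constant.
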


Proposition \ref{theorem:2_new} demonstrates that as long as $\epsilon_\pi$ is sufficiently small, the state distribution is well preserved, thus yielding images that are visually more similar to the original ones. This suggests that in practice, $\epsilon_\pi$ can be treated as a hyper-parameter, balancing the attacker's performance with stealthiness. Finally, comparing actions or rewards is also a viable method to detect potential attacks. However, in many practical multi-agent systems, agents are decentralized, and actions or rewards are private to each agent \citep{zhang2018fully}, not always available to humans aiming to detect potential adversarial attacks.

\textit{Comparison with single-agent stealthy attacks.} \cite{russo2021balancing, franzmeyer2022illusionary} consider stealthy attacks in a \textit{different} setting, involving adversarial state or action perturbations, within single-agent RL. Their concept of unstealthiness or detectability is predicated on the inconsistencies in the \textit{transition dynamics} when states or actions are adversarially perturbed, necessitating an accurate world model. However, in our scenario, even if such a world model is accessible, the (global) transition dynamics $P$ remain unaffected by the adversarial policy. Concurrently, comparing the \textit{marginalized} transition dynamics induced by $\hat{\pi}_\alpha$ and $\pi_\alpha$ is plausible \textit{from the perspective of the victim $\nu$}. Based on Proposition \ref{prop:detection}, inconsistencies in the marginalized transition dynamics can also be upper-bounded by the variation in the policy space, assuring low \textit{detectability} as considered in \cite{russo2021balancing, franzmeyer2022illusionary}. Thus, even if humans or detectors can access the private actions of the agent $\nu$ and establish accurate corresponding marginalized transition dynamics, discrepancies might remain undetected as long as $\epsilon_\pi$ is maintained minimal.

\begin{proposition}[\textit{Bounded policy discrepancy induces bounded marginalized transition dynamics inconsistencies}]\label{prop:detection}
We define the {\it marginalized} transition dynamic of agent $\nu$ as $P^{\pi_\alpha}_\nu(s^\prime\given s, a_\nu) := \EE_{a_\alpha\sim \pi_\alpha(\cdot\given s)}[P(s^\prime\given s, a_\alpha, a_\nu)]$ for given $\pi_\alpha$. $P^{\hat{\pi}_\alpha}_\nu$ is defined similarly for the policy $\hat{\pi}_\alpha$. Then for any $s\in\cS$ and $a_\nu\in \cA_\nu$, we have
			$D_{f}\left(P^{\pi_\alpha}_\nu(\cdot \given s, a_\nu)\mid\mid P^{\hat{\pi}_\alpha}_\nu(\cdot \given s, a_\nu)\right)\le D_{f}\Big(\pi_\alpha(\cdot \given s)\mid\mid \hat{\pi}_\alpha(\cdot \given s)\Big)$,
	where $D_f$ is any $f$-divergence, which includes $D_{\operatorname{TV}}$, connecting back to the attack budget.
\end{proposition}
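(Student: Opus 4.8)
The plan is to recognize Proposition~\ref{prop:detection} as an instance of the data-processing inequality for $f$-divergences. The crucial structural observation is that, once we fix $s$ and $a_\nu$, the marginalized dynamics $P^{\pi_\alpha}_\nu(\cdot\mid s,a_\nu)$ and $P^{\hat{\pi}_\alpha}_\nu(\cdot\mid s,a_\nu)$ are obtained from $\pi_\alpha(\cdot\mid s)$ and $\hat{\pi}_\alpha(\cdot\mid s)$ by passing both through the \emph{same} fixed Markov kernel $K(s'\mid a_\alpha):=P(s'\mid s,a_\alpha,a_\nu)$. That is, both target distributions are pushforwards of the two policies under one common channel, so that averaging over $a_\alpha$ can only contract any $f$-divergence. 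Since $D_{\operatorname{TV}}$ is itself an $f$-divergence (with $f(t)=\tfrac12|t-1|$), proving the general $f$-divergence statement immediately yields the claim about total variation, and hence the connection back to the attack budget via Equation~\ref{eq:divergence_new}.

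Concretely, I would carry out the standard Jensen-based derivation. Writing $D_f(p\|q)=\sum_x q(x)\,f(p(x)/q(x))$ with $f$ convex and $f(1)=0$, I first introduce, for each target state $s'$, the posterior weights $w_{s'}(a_\alpha):=\hat{\pi}_\alpha(a_\alpha\mid s)\,K(s'\mid a_\alpha)/P^{\hat{\pi}_\alpha}_\nu(s'\mid s,a_\nu)$, which form a genuine probability distribution over $a_\alpha$. A one-line computation then expresses the likelihood ratio of the marginals as a $w_{s'}$-average of the per-action likelihood ratios, namely $P^{\pi_\alpha}_\nu(s'\mid s,a_\nu)/P^{\hat{\pi}_\alpha}_\nu(s'\mid s,a_\nu)=\sum_{a_\alpha}w_{s'}(a_\alpha)\,\pi_\alpha(a_\alpha\mid s)/\hat{\pi}_\alpha(a_\alpha\mid s)$. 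Applying convexity of $f$ (Jensen's inequality) inside the sum over $s'$, substituting the definition of $w_{s'}$, exchanging the order of summation over $s'$ and $a_\alpha$, and using $\sum_{s'}K(s'\mid a_\alpha)=1$ collapses the double sum back to $\sum_{a_\alpha}\hat{\pi}_\alpha(a_\alpha\mid s)\,f\!\big(\pi_\alpha(a_\alpha\mid s)/\hat{\pi}_\alpha(a_\alpha\mid s)\big)=D_f\!\big(\pi_\alpha(\cdot\mid s)\,\|\,\hat{\pi}_\alpha(\cdot\mid s)\big)$, which is exactly the desired bound.

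I do not expect a genuine obstacle here, since the content is essentially the data-processing inequality, valid for arbitrary convex $f$. The only points demanding care are bookkeeping ones: verifying that the weights $w_{s'}$ sum to one and that the likelihood-ratio identity holds exactly, and handling degenerate supports (states $s'$ with $P^{\hat{\pi}_\alpha}_\nu(s'\mid s,a_\nu)=0$, or actions with $\hat{\pi}_\alpha(a_\alpha\mid s)=0$) through the usual $f$-divergence conventions, e.g. $0\cdot f(0/0)=0$ together with the recession term $f'(\infty)=\lim_{t\to\infty}f(t)/t$ for the singular part. For discrete action spaces these are routine; for continuous $\mathcal{A}_\alpha$ one replaces sums by integrals and the same Jensen argument, now taken with respect to the posterior measure $w_{s'}$, goes through verbatim. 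Specializing $f(t)=\tfrac12|t-1|$ recovers the $D_{\operatorname{TV}}$ statement and completes the link to the attack budget $\epsilon_\pi$.
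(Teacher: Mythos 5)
Your proposal is correct and takes the same route as the paper: the paper's proof simply observes that $P(\cdot\mid s,a_\nu,\cdot)$ is a fixed channel mapping the action distribution to the next-state distribution and invokes the data-processing inequality for $f$-divergences. You identify exactly this structure and additionally spell out the standard Jensen-based proof of the data-processing inequality itself, which the paper cites as known.
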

\textbf{(c) Connection to action adversarial RL.} Intriguingly, our attack formulation also extends the single-agent action adversarial RL \citep{tessler2019action} to a multi-agent setting. Specifically, in PR-MDP (cf. Definition 1 of \cite{tessler2019action}), the policy under attack aligns with our Equation \ref{eq:divergence_new}. In the context of single-agent action adversarial RL, the policy $\tilde{\pi}_\alpha$ is only a \textit{part} of the finally executed policy, while the policy $\hat{\pi}_\alpha$ represents the victim. Thus, our formulation broadens the attack setting of \cite{tessler2019action} to multi-agent RL, considering the \textit{other} agent $\nu$ as the victim instead of the agent $\alpha$ itself. Moreover, determining the most robust policy for the victim using the policy iteration scheme for PR-MDP from \cite{tessler2019action} becomes inefficient in our context due to the absence of specific structures inherent in PR-MDP (Section 4 of \cite{tessler2019action}). 

Henceforth, we will abbreviate $\tilde{\pi}_\alpha$ as $\pi_\alpha$ without ambiguity, and the actually deployed policy for agent $\alpha$ is represented as $(1-\epsilon_\pi)\hat{\pi}_{\alpha} + \epsilon_\pi \pi_\alpha$. Detailed proofs and discussions related to this section are available in \S\ref{sec:full_proof}.

\section{Improved adversarial training with timescale separation}
\label{sec:defense}
 \textbf{On the necessity and challenge of provably efficient defenses.} As discussed before, to provide effective defenses, there are unique challenges standing out compared with single-agent robust RL \citep{tessler2019action}. 
 Meanwhile, finding the celebrated solution concept Nash Equilibrium (NE) between the attacker and the victim suffices for finding the most robust victim policy during robust training but may not be necessary since NE guarantees that the attacker is also non-exploitable. 
 We provide more detailed discussions on the relationship between NE and robustness in \S\ref{sec:ne_robust}. There are a bunch of existing works solving NE for structured extensive-form games \citep{lockhart2019computing,brown2019deep,sokota2022unified} or for general games but without provably efficient guarantees \citep{fudenberg1995consistency,lanctot2017unified, balduzzi2019open,muller2019generalized}. In practice, general game-theoretical methods often require solving best response problems iteratively, thus being computationally expensive. In theory, simply plugging in {\it black-box} NE solvers may not solve our problem with provable efficiencies since finding even {\it local} NE for a {\it general} nonconvex-nonconcave problem is computationally hard \citep{daskalakis2021complexity}. Therefore, instead of adopting a black-box game-theoretical solver, we investigate adversarial training, a popular and more efficient paradigm for robust RL \citep{pinto2017robust,zhang2021robust,sun2021strongest}.

There are prior works that utilize well-trained attacks for re-training to fortify the robustness of the victim \citep{gleave2019adversarial, guo2021adversarial, wu2021adversarial}. However, it has been demonstrated that while re-training against a specific adversarial policy does augment robustness against it, the performance against other policies may be compromised \textcolor{black}{as validated by \cite{gleave2019adversarial}}. \textcolor{black}{Intuitively, if the victim is retrained against a specific attacker, its policy might be overfitted to that attacker.} Thus, it is vital to uphold the performance of the victim against all potential attackers. Rather than merely re-training against a specific attacker, adversarial training methods have been shown to be effective in bolstering robustness against a broad spectrum of adversarial attacks. In these methods, the victim and the attacker are trained alternatively or simultaneously \citep{zhang2020robust, pinto2017robust}. Here, we re-examine adversarial training in the RL domain and demonstrate that prevalent adversarial training methods encounter a \textit{non-converging} problem with either alternative or simultaneous training, for which we defer examples and detailed discussions to \S\ref{sec:motivation}. To address these issues formally, we contemplate the robustness of the victim and define the exploitability of $\pi_{\nu}$ under the \textit{worst-case} attack as follows.
\begin{definition}[{(One-side) exploitability}]
Given $\epsilon_\pi\in [0, 1]$ and $\hat{\pi}_\alpha$, for a victim policy $\pi_{\nu}$, we measure the robustness of $\pi_{\nu}$ by:
$
\operatorname{Expl}(\pi_{\nu}) = -\min_{\pi_\alpha}V_{\rho}(\pi_{\nu}, (1-\epsilon_\pi)\hat{\pi}_\alpha + \epsilon_\pi\pi_{\alpha}).
$
\end{definition}
\textbf{Intuitions of timescale separation.} The smaller $\operatorname{Expl}(\pi_{\nu})$ is, the more robust $\pi_\nu$ is.
Therefore, to ensure the worst-case performance against the strongest adversarial policy, the victim should optimize the policy according to $\min_{\pi_{\nu}}\operatorname{Expl}(\pi_\nu)$. Ideally, if we can derive an analytical form of the function $\operatorname{Expl}(\cdot)$ or compute its gradient, then we can simply run gradient descent to optimize it. Unfortunately, it is not obvious how to derive an analytical form and the function may not be even differentiable, let alone computing the gradient since the function relies on solving a minimization problem. However, it is possible to first solve the minimization problem, getting $\pi_\alpha^\star$, and compute the gradient w.r.t $\pi_\nu$, namely $\nabla_{\pi_\nu}-V_{\rho}(\pi_{\nu}, (1-\epsilon_\pi)\hat{\pi}_\alpha + \epsilon_\pi\pi_{\alpha}^\star)$, {\it as if $\pi_\alpha^\star$ is fixed, hoping it could serve as a good descent direction}. Formalizing this intuition, we propose to improve adversarial training via {\it timescale separation} with Min oracle (shown in Algorithm \ref{alg:max}), where timescale separation comes from the fact the attacker takes a min step against the victim in line $3$ while the victim takes only one gradient update in line $4$. Note \cite{lockhart2019computing} also considers {\it directly} minimizing the exploitability function but the algorithm and analysis are only applicable to extensive-form games. \textcolor{black}{Finally, we remark that Algorithm \ref{alg:max} is consistent with the leader-follower update style that is developed for Stackelberg equilibrium in multi-agent RL \citep{gerstgrasser2023oracles}.}
\begin{algorithm}[!htbp]
      \caption{Adversarial Training with Min-oracle}
      \label{alg:max}
    \begin{algorithmic}[1] 
      \STATE {\bfseries Input:} random policy $\pi_{\nu}^{0}$, learning rate sequence $\{\eta^t\}$
      \FOR{$t=0$ {\bfseries to} $T$}
      \STATE $\pi_{\alpha}^{t}\leftarrow \arg\min_{\pi_\alpha}V_{\rho}(\pi_{\nu}^{t}, (1-\epsilon_\pi)\hat{\pi}_\alpha + \epsilon _\pi\pi_{\alpha})$.
      \STATE $\pi_{\nu}^{t + 1}\leftarrow \mathcal{P}_{\Pi_{\nu}}(\pi_{\nu}^t + \eta^t\nabla_{\pi_\nu}V_{\rho}(\pi_{\nu}^t, (1-\epsilon_\pi)\hat{\pi}_\alpha + \epsilon _\pi\pi_{\alpha}^t)$.  // projection onto the simplex
      \ENDFOR
      \STATE {\bfseries Output:} sample $\pi_{\nu}^{t}$ with probability proportional to $\eta^{t}$.
    \end{algorithmic}
\end{algorithm}

\textbf{Efficient approximation.} The min oracle used in Algorithm \ref{alg:max} can be implemented with standard RL algorithms like PPO. When the game has special structures like extensive-form games \citep{lockhart2019computing} or one agent has a substantially smaller state/action space, such min oracle can be even implemented quite efficiently. However, in general, to make one gradient update for agent $\nu$, agent $\alpha$ needs to compute a complete best response, which is computationally expensive in practice. To fix this issue, we utilize the idea of using a much faster update scale for agent $\alpha$ so that when agent $\nu$ performs the gradient update, the policy $\pi_\alpha^t$ is always and already an {\it approximate solution} of $\arg\min_{\pi_\alpha}V_{\rho}(\pi_{\nu}^t, (1-\epsilon_\pi)\hat{\pi}_\alpha + \epsilon_\pi\pi_{\alpha})$. Formally, {in addition to} Algorithm \ref{alg:max}, we present an {alternative efficient} Algorithm \ref{alg:tts}, where the min oracle is replaced by a gradient update with a larger step size and both agents only need to perform a gradient update independently. \textit{Therefore, our final algorithm is simple and compatible with standard RL algorithms, like PPO to implement the gradient update step for both agents, avoiding solving best responses at each iteration under popular game-theoretical approaches \citep{fudenberg1995consistency,lanctot2017unified, balduzzi2019open,muller2019generalized}}. To validate our intuitions and verify that our algorithms do provide provably efficient defenses, we shall prove the convergence guarantee of both algorithms in the next section.

\begin{algorithm}[!htbp]
      \caption{Adversarial Training with Two Timescales}
      \label{alg:tts}
    \begin{algorithmic}[1] 
      \STATE {\bfseries Input:} random policy $\pi_{\nu}^{0}$, $\pi_{\alpha}^0$, learning rate sequence $\{\eta_\nu^t\}$, $\{\eta_\alpha^t\}$, such that $\eta_\nu^t\ll \eta_{\alpha}^t$.
      \FOR{$t=0$ {\bfseries to} $T$}
      \STATE $\pi_{\alpha}^{t + 1}\leftarrow \mathcal{P}_{\Pi_{\alpha}}(\pi_{\alpha}^t - \eta_{\alpha}^t\nabla_{\pi_\alpha}V_{\rho}(\pi_{\nu}^t, (1-\epsilon_\pi)\hat{\pi}_\alpha+\epsilon_\pi\pi_{\alpha}^t))$.
      \STATE $\pi_{\nu}^{t + 1}\leftarrow \mathcal{P}_{\Pi_{\nu}}(\pi_{\nu}^t +\eta_{\nu}^t\nabla_{\pi_\nu}V_{\rho}(\pi_{\nu}^t, (1-\epsilon_\pi)\hat{\pi}_\alpha+\epsilon_\pi\pi_{\alpha}^t))$.
      \ENDFOR
      \STATE {\bfseries Output:} sample $\pi_{\nu}^{t}$ with probability proportional to $\eta_{\nu}^t$.
    \end{algorithmic}
\end{algorithm}

\section{Theoretical analysis}
\label{sec:theory}

To understand and verify our approaches, we start by considering direct policy parameterization for both agents $\nu$ and $\alpha$, which is already challenging due to the non-convexity, non-concavity, and serves as the first step to analyze more complex function approximations.
\begin{definition}[Direct parameterization]\label{def:param}
    The policies $\pi_\nu$ and $\pi_\alpha$ have the  parameterization
        $
        \pi_{\nu}(a\given s) = \nu_{s, a}$,
        $
        \pi_{\alpha}(a\given s) = \alpha_{s, a},
    $
    where $\nu\in \Delta(\cA_\nu)^{|\cS|}$ and $\alpha\in \Delta(\cA_{\alpha})^{|\cS|}$. 
\end{definition}
For convenience of discussions, given $\epsilon_\pi$ and $\hat{\pi}_{\alpha}$, we will write $J_{\epsilon_\pi}(\pi_\nu, \pi_\alpha):=V_{\rho}(\pi_\nu, (1-\epsilon_\pi)\hat{\pi}_\alpha +\epsilon_\pi\pi_\alpha)$.
Before proving the convergence of our methods, we define the mismatch coefficient to measure the intrinsic hardness of the environment. \textcolor{black}{This is achieved by comparing the stationary state occupancy frequencies under certain policies against the initial state distribution. In simpler terms, a smaller value of this quantity indicates that the environment is more easily explorable.}
\begin{definition}\label{assump:c_g}
Given the Markov game $\cG$, benign policy $\hat{\pi}_\alpha$, and attack budget $\epsilon_\pi$, we define the minimax mismatch coefficient as
\begin{small}
	\begin{align*}
    C_{\mathcal{G}}^{\epsilon_{\pi}}:=\max \left\{\max_{\pi_\nu\in \Pi_{\nu}} \min_{\pi_\alpha \in \Pi_\alpha^{\star}\left(\pi_\nu\right)}\left\|\frac{d_\rho^{\pi_\nu, (1-\epsilon_\pi)\hat{\pi}_\alpha + \epsilon_\pi\pi_\alpha}}{\rho}\right\|_{\infty},
    \max_{\pi_\alpha\in\Pi_\alpha} \min_{\pi_\nu \in \Pi_\nu^{\star}\left(\pi_\alpha\right)}\left\|\frac{d_\rho^{\pi_\nu, (1-\epsilon_\pi)\hat{\pi}_\alpha + \epsilon_\pi\pi_\alpha}}{\rho}\right\|_{\infty}\right\},
\end{align*}
\end{small}
where $\Pi_\alpha^{\star}\left(\pi_\nu\right):=\arg\min_{\pi_\alpha\in \Pi_{\alpha}} J_{\epsilon_\pi}(\pi_\nu, \pi_\alpha)$, and $\Pi_\nu^{\star}\left(\pi_\alpha\right):=\arg\max_{\pi_\nu\in \Pi_{\nu}} J_{\epsilon_\pi}(\pi_\nu, \pi_\alpha)$. 
\end{definition}
With those two definitions, we can analyze how the robustness of the victim improves during adversarial training as follows:
\begin{theorem}\label{thm:max}
Fix any $\delta>0, \epsilon_\pi\in [0, 1]$. For Algorithm \ref{alg:max}, suppose the learning rate $\eta_\nu^t \asymp \delta$, after $T$ iterations, it is guaranteed that
$\frac{1}{T}\sum_{t=1}^T \operatorname{Expl}(\pi_\nu^t)\le \min_{\pi_\nu}\operatorname{Expl}(\pi_\nu)+\delta,$ 
    where $T = \frac{1}{\delta^2}\operatorname{poly}( C_\cG^{\epsilon_\pi}, |\cS|, |\cA_\alpha|, |\cA_\nu|, \frac{1}{1-\gamma})$; 
    while for Algorithm \ref{alg:tts}, suppose the learning rate $\eta_\nu^t \asymp \delta^{8}$, $\eta_\alpha^t \asymp \delta^{4}$, after $T$ iterations, it is guaranteed that
$\frac{1}{T}\sum_{t=1}^T \operatorname{Expl}(\pi_\nu^t)\le \min_{\pi_\nu}\operatorname{Expl}(\pi_\nu)+\delta,$ 
    where $T = \operatorname{poly}(\frac{1}{\delta}, C_\cG^{\epsilon_\pi}, |\cS|, |\cA_\alpha|, |\cA_\nu|, \frac{1}{1-\gamma})$.
\end{theorem}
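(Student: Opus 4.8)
The plan is to recast the defender's problem as the maximin program $\max_{\pi_\nu}\min_{\pi_\alpha} J_{\epsilon_\pi}(\pi_\nu,\pi_\alpha)$, noting that $-\operatorname{Expl}(\pi_\nu)=\min_{\pi_\alpha}J_{\epsilon_\pi}(\pi_\nu,\pi_\alpha)=:g(\pi_\nu)$, and to exhibit both algorithms as (exact, resp.\ inexact) projected gradient ascent on the nonsmooth, nonconcave objective $g$. Two ingredients drive everything. First, under direct parameterization $J_{\epsilon_\pi}$ is $L$-smooth in each block of variables with $L=\operatorname{poly}(|\cS|,|\cA_\nu|,|\cA_\alpha|,\tfrac{1}{1-\gamma})$; the $\epsilon_\pi$-mixing only rescales the $\pi_\alpha$-gradient and does not affect smoothness qualitatively. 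Second, I would invoke Danskin's (envelope) theorem: if $\pi_\alpha^\star(\pi_\nu)\in\arg\min_{\pi_\alpha}J_{\epsilon_\pi}(\pi_\nu,\pi_\alpha)$, then at points of differentiability $\nabla g(\pi_\nu)=\nabla_{\pi_\nu}J_{\epsilon_\pi}(\pi_\nu,\pi_\alpha^\star(\pi_\nu))$ is a valid ascent direction. This is precisely the direction used in line $4$ of Algorithm \ref{alg:max}, which justifies treating $\pi_\alpha^t$ as frozen when updating $\pi_\nu$.

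The key structural step is a global gradient-domination (\L ojasiewicz-type) inequality for $g$, and this is where the mismatch coefficient $C_\cG^{\epsilon_\pi}$ enters. Fix $\pi_\nu$ and its best response $\pi_\alpha^\star=\pi_\alpha^\star(\pi_\nu)$; with the attacker frozen, the victim faces an induced single-agent MDP, so the standard direct-parameterization gradient-domination bound gives that $\max_{\pi_\nu'}J_{\epsilon_\pi}(\pi_\nu',\pi_\alpha^\star)-J_{\epsilon_\pi}(\pi_\nu,\pi_\alpha^\star)$ is controlled by the first-order stationarity gap times a density-ratio factor $\|d_\rho^{(\cdot)}/\rho\|_\infty$. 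Since $\max_{\pi_\nu'}J_{\epsilon_\pi}(\pi_\nu',\pi_\alpha^\star)\ge J_{\epsilon_\pi}(\pi_\nu^\star,\pi_\alpha^\star)\ge \min_{\pi_\alpha}J_{\epsilon_\pi}(\pi_\nu^\star,\pi_\alpha)=g^\star$, where $(\pi_\nu^\star,\pi_\alpha^\star)$ is a saddle point (existence via Shapley's minimax theorem for zero-sum Markov games), this yields $g^\star-g(\pi_\nu)\le C_\cG^{\epsilon_\pi}\cdot(\text{gradient mapping of }g)$, so first-order stationary points of the projected ascent are globally optimal. A symmetric argument on the frozen-victim induced MDP gives the analogous domination bound for the inner gap $J_{\epsilon_\pi}(\pi_\nu,\cdot)-g(\pi_\nu)$; the two directions are exactly why $C_\cG^{\epsilon_\pi}$ is defined as a maximum over a $\pi_\nu$-term and a $\pi_\alpha$-term.

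With these in place, Algorithm \ref{alg:max} is the textbook projected-ascent argument: the smoothness-based ascent lemma, summed and telescoped, shows the average squared gradient mapping is $O(1/(\eta T))$; taking $\eta_\nu^t\asymp\delta$ and combining with the gradient-domination inequality gives $\tfrac1T\sum_t \operatorname{Expl}(\pi_\nu^t)\le \min_{\pi_\nu}\operatorname{Expl}(\pi_\nu)+\delta$ after $T=\tfrac{1}{\delta^2}\operatorname{poly}(\cdot)$ steps, and the randomized output rule realizes this average. For Algorithm \ref{alg:tts} the exact min oracle is replaced by one fast gradient step on $\pi_\alpha$, so $\pi_\alpha^t$ is only an approximate best response, and the victim's direction $\nabla_{\pi_\nu}J_{\epsilon_\pi}(\pi_\nu^t,\pi_\alpha^t)$ carries an error that, via smoothness and the inner gradient-domination bound, is controlled by the tracking error $J_{\epsilon_\pi}(\pi_\nu^t,\pi_\alpha^t)-g(\pi_\nu^t)$. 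I would then run a two-timescale tracking argument: the fast player contracts this error at the rate set by $\eta_\alpha$, while the slow player shifts the target best response by $O(\eta_\nu)$ per step (drift bounded by smoothness), so the steady-state tracking error scales like $\eta_\nu/\eta_\alpha$ plus an $O(\eta_\alpha)$ optimization floor. Propagating this biased gradient through the outer ascent-plus-domination bound and optimizing the trade-off forces the schedule $\eta_\nu\asymp\delta^{8}\ll\eta_\alpha\asymp\delta^{4}$ and yields the stated $\operatorname{poly}(1/\delta)$ rate.

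The main obstacle is this coupled two-timescale analysis: one must simultaneously control the nonstationarity of the inner MDP as $\pi_\nu^t$ drifts, the contraction of the inner iterate toward a moving best response, and the propagation of the resulting gradient bias into the outer global-convergence bound, all while the problem is nonconvex-nonconcave so only the gradient-domination surrogate, not joint convexity, is available to close the loop. Balancing the three error sources to extract the explicit powers of $\delta$ is the delicate quantitative heart of the proof; by contrast, the gradient-domination lemma and the single-timescale analysis of Algorithm \ref{alg:max} are comparatively routine once the induced-MDP reduction is set up.
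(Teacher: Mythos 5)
Your proposal matches the paper's proof in all essentials: the paper establishes exactly the smoothness and two-sided gradient-domination properties you describe (the latter by rewriting $J_{\epsilon_\pi}$ as the value function of a modified Markov game that absorbs the $\epsilon_\pi$-mixing into the transition and reward, so that the Markov-game gradient-domination lemma of Zhang et al.\ applies with constant $C_\cG^{\epsilon_\pi}/(1-\gamma)$), and then invokes the max-oracle result of Jin et al.\ for Algorithm 1 and the two-timescale GDA theorem of Daskalakis et al.\ for Algorithm 2. The only cosmetic difference is that your ``ascent lemma on the nonsmooth $g$'' and ``two-timescale tracking argument'' are executed in the paper by citing those theorems wholesale, with progress measured on the Moreau envelope $\phi_{1/2l}$ (whose gradient is exactly your gradient mapping) rather than on $g$ itself.
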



\begin{remark}
	To get a non-vacuous finite time convergence, the mismatch coefficient needs to be bounded, which is standard and necessary in the analysis of policy gradient methods \citep{daskalakis2020independent, agarwal2021theory}, where our definition of $C_\cG^{\epsilon_\pi}$ is based on the definition in \citep{daskalakis2020independent}. It is worth noticing that such an assumption is weaker than other similar notions such as concentrability \citep{munos2003error, chen2019information}, without requiring every visitable state to be visited at the first time step. 
\end{remark}
\textbf{Implications.} This theorem demonstrates that, \textit{on average}, the victim policy $\pi_{\nu}^t$ is assured to converge to the \textit{most robust} one; that is, the solution of $\arg\max_{\pi_\nu}\min_{\pi_\alpha} V_\rho(\pi_\nu, (1-\epsilon_\pi)\hat{\pi}\alpha + \epsilon\pi\pi_\alpha)$.
Theorem \ref{thm:max} reveals that Algorithm \ref{alg:max} achieves better iteration complexity owing to a larger learning rate. Meanwhile, the convergence for Algorithm \ref{alg:tts} also substantiates the necessity of timescale separation due to $\eta_\nu^t\ll \eta_\alpha^t$. To the best of our knowledge, the analysis for Algorithm \ref{alg:max} is new, even for the $\epsilon_\pi = 1$ case, and the analysis for Algorithm \ref{alg:tts} leverages \cite{daskalakis2020independent,jin2020local,zhang2021gradient} but addresses the more prevalent discounted reward setting \textit{without} assuming the game to cease at every state with a positive probability. It also permits the attacker to exert only partial control over the agent $\alpha$ within any stipulated attack budget $\epsilon_\pi$. Importantly, our theorem can be readily extended to encompass results of \textit{last-iterate} convergence using the regularization techniques highlighted in \cite{zeng2022regularized} and \textit{stochastic} gradients by deploying popular gradient estimators from finite samples, with the central concept remaining timescale separation. Detailed proofs are available in \S\ref{sec:full_proof}.

\begin{figure*}[!t]
\centering
\includegraphics[width=0.99\textwidth]{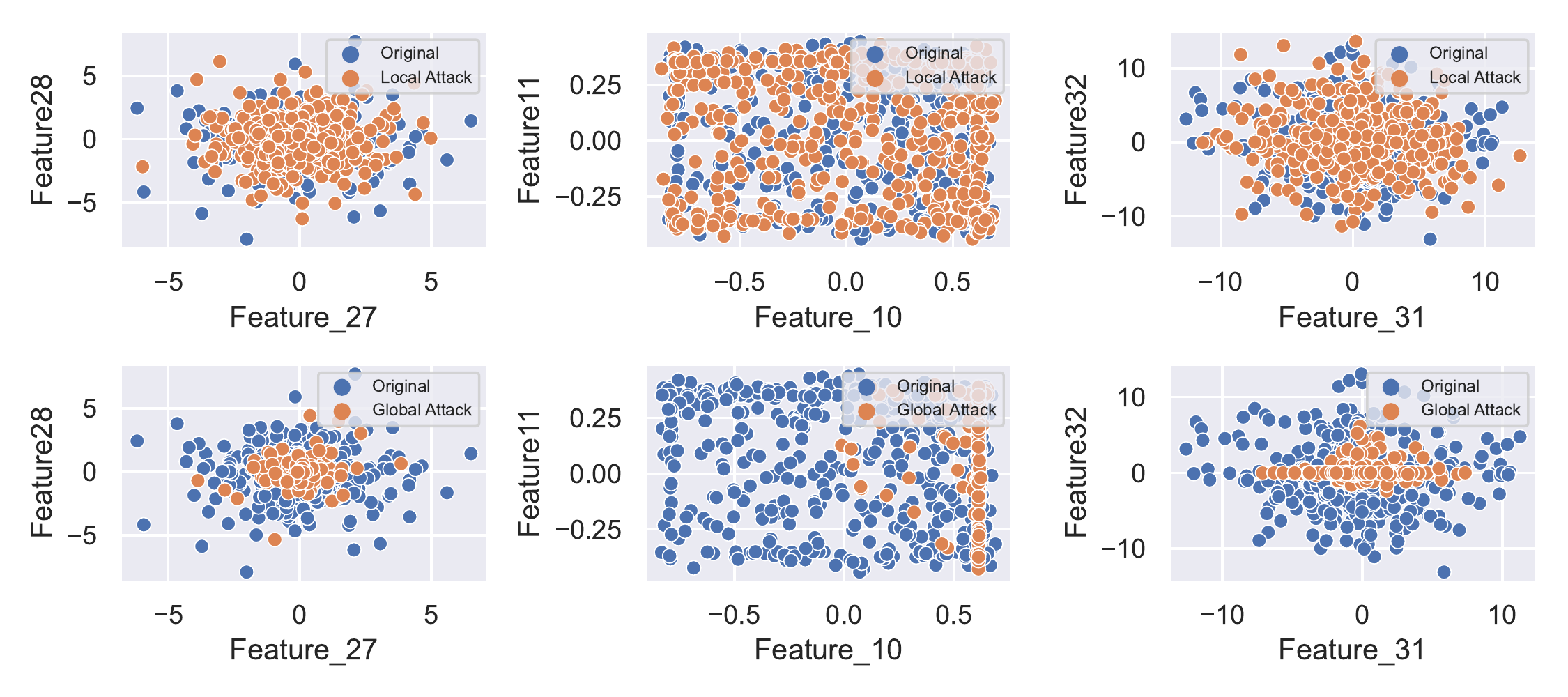}
\vspace{-5mm}
\caption{
{Visualization and comparison of our proposed constrained attack with $\epsilon_\pi=0.2$ (first row) vs. an unconstrained attack (second row, $\epsilon_\pi = 1$), under the condition that both achieve the same attacking success rate.}
The most important state features are shown.
It is clear that our constrained adversarial policy induces much smaller state distribution shifts.}
\label{fig:feat_imp}
 \vspace{-1em}
\end{figure*}
\vspace{-0.5em}
\section{Related work}\label{sec:related}
 \vspace{-0.5em}
\textbf{Stealthy adversarial attacks in RL.} 
To render attacks on RL policies more feasible and practical, \cite{sun2020stealthy} demonstrates that targeting critical points can facilitate efficient and stealthy attacks. \cite{russo2021balancing} optimizes both attack detectability and victim performance, analyzing the trade-off between them. \cite{franzmeyer2022illusionary} introduces an illusionary attack that maintains consistency with the environment’s transition dynamics, necessitating a world model. The aforementioned stealthy attacks focus on perturbing the victim's state observations/actions. In contrast, this paper presents a generalized attack framework for multi-agent systems, allowing for stealth by managing the attack budget, wherein the attacker indirectly influences the victim by altering another agent’s policy.

\textbf{Timescale separation for adversarial training.} Adversarial training is a widely-adopted method for cultivating models robust against adversarial attacks. The efficacy of timescale separation in this context has been empirically affirmed; having increased loops for the inner attack subroutine translates to enhanced robustness \cite{madry2017towards,shafahi2019adversarial}. Likewise, in training GANs \citep{heusel2017gans}, utilizing a larger learning rate for the discriminator surpasses conventional GAN training and ensures convergence to a local NE. Moreover, \cite{fiez2021local} explores more general non-convex non-concave zero-sum games and elucidates the local convergence to strict local minimax equilibrium with finite timescale separation. Contrarily, our adversarial training algorithm is assured to converge to the (globally) most robust policy.
 


\vspace{-0.5em}
\section{Experiments}

\begin{figure}[!t]
  \centering
\begin{subfigure}{\textwidth}
\centering
\includegraphics[width=.9\textwidth]{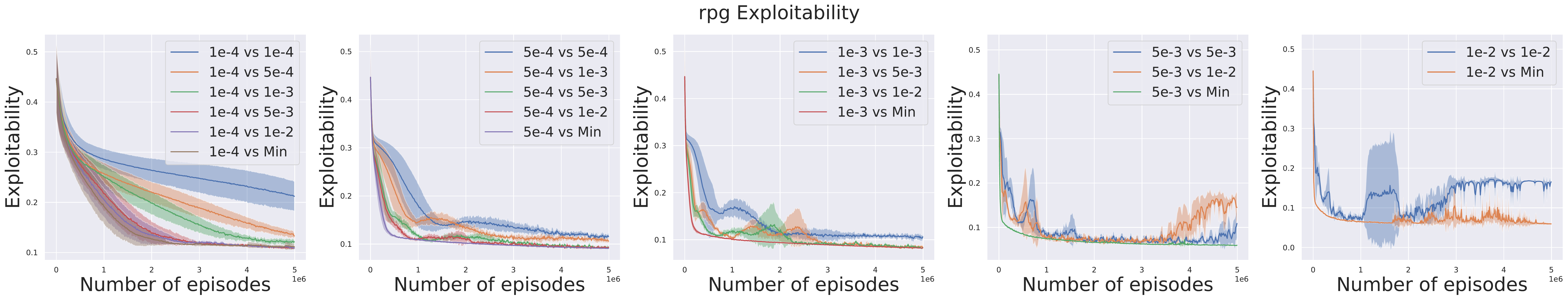}
\caption{Exploitability of the victim when using RPG for policy gradient.}
\end{subfigure}
\begin{subfigure}{\textwidth}
\centering
\includegraphics[width=.9\textwidth]{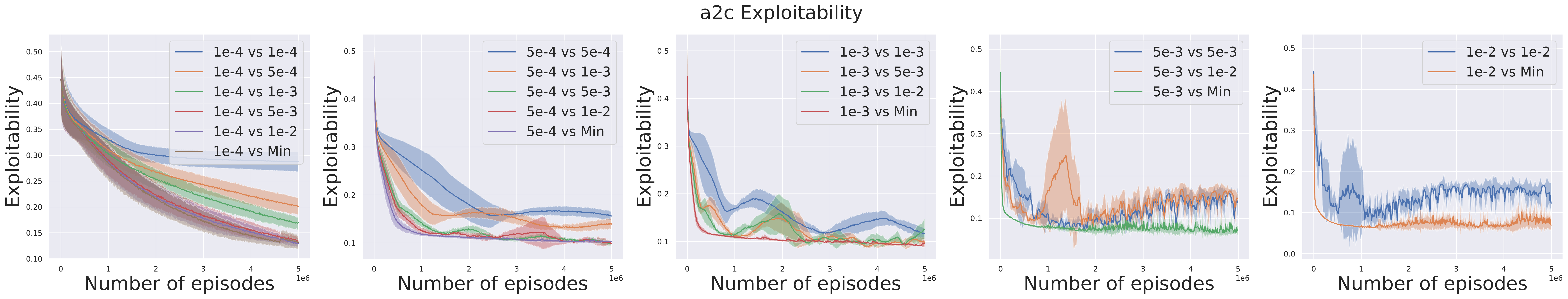}
\caption{Exploitability of the victim when using A2C for policy gradient.}
\end{subfigure}
\vspace{-5mm}
\caption{Exploitability of victim policy in Kuhn Poker trained by two timescale and single timescale (min indicates the policy trained with a min oracle).}
\label{exp:kuhn_poker}
\vspace{-2em}
\end{figure}

\vspace{-0.5em}
Our experiments utilize two standard environments: Kuhn Poker \citep{kuhn1950simplified,lanctot2019openspiel} and RoboSumo \citep{al2017continuous}. Detailed introductions to these environments, implementation specifics, and hyper-parameters are outlined in \S\ref{app:exp}. Unless specified otherwise, the results are averaged over $5$ seeds. In this section, we seek to address the following pivotal questions:
\begin{list}{$\rhd$}{\topsep=-1.ex \leftmargin=0.19in \rightmargin=0.in \itemsep=-0.05in}
\item \textbf{Q1.} Can our generalized attack formulation produce fewer state distribution variations and more stealthy behaviors compared to an unconstrained attack?
\item \textbf{Q2.} Will adversarial training with timescale separation (involving a min oracle and two timescales) exhibit more stable learning dynamics, and can the two timescales algorithm effectively approximate the adversarial training with a min oracle?
\item \textbf{Q3.} In complex environments, where a min oracle may not be available, can adversarial training with two timescales enhance robustness compared to prevalent and acclaimed baselines?
\end{list}



\textbf{Controllable adversarial attack (Q1).}
We conduct experiments on the Robosumo environment. 
To verify that our attack formulation indeed achieves smaller state distribution variations, we choose an unconstrained adversarial policy and a constrained policy with the same winning rate for fair comparisons. 
Specifically, we investigate the distribution shift in the victim's observation part of the state features. 
\begin{wrapfigure}{r}{0.4\columnwidth}
    \centering
    \vspace{-0.5em}
    \includegraphics[width=0.4\columnwidth]{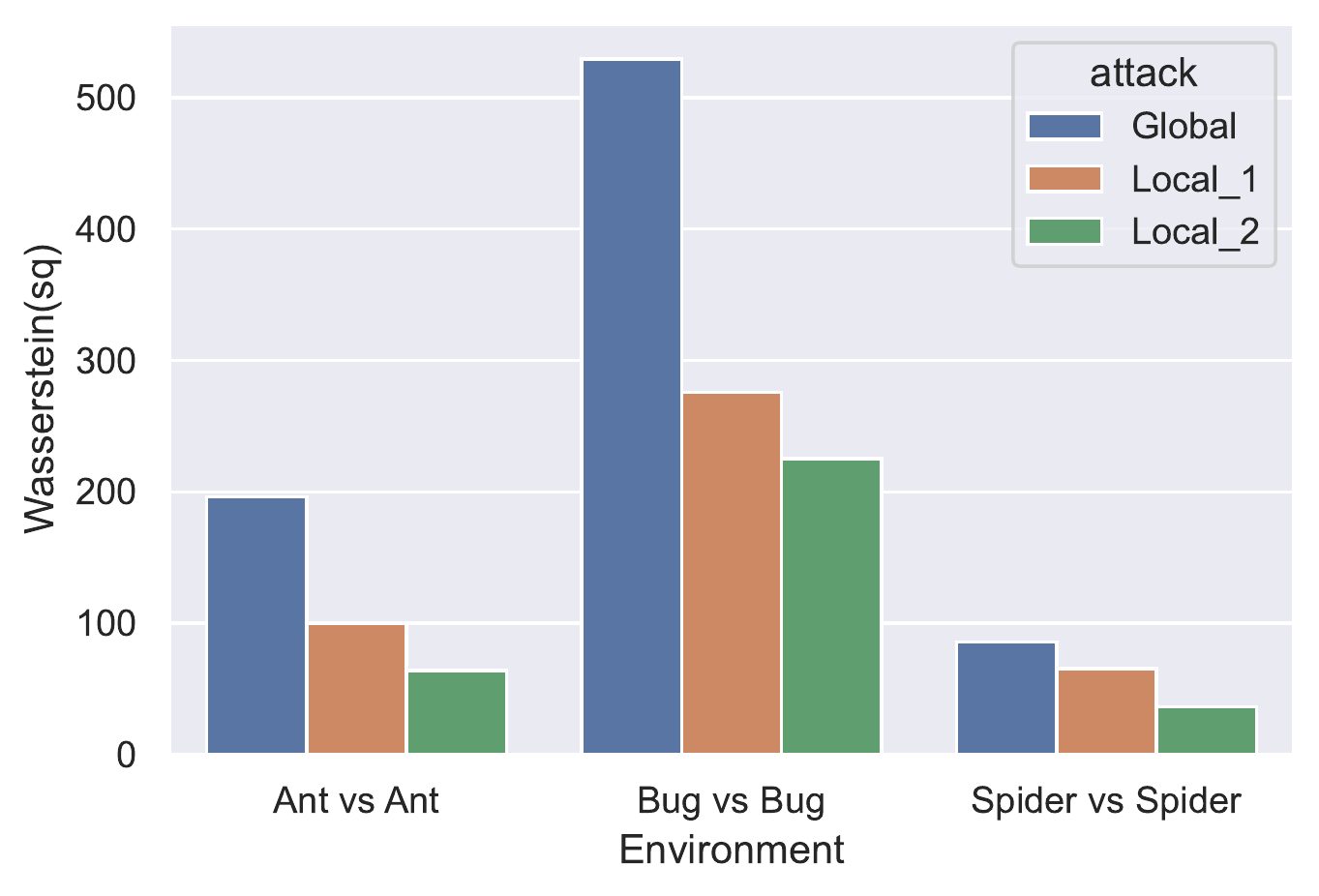}
    \vspace{-2em}
    \caption{State-distribution shift w.r.t Wasserstein-2 distance (squared) incurred due to the Global ($\epsilon_\pi = 1$), Local\_1 ($\epsilon_\pi = 0.7$),  Local\_2($\epsilon_\pi = 0.3$) attacks in $3$ Robosumo environments. 
    } 
    \label{fig:wasserstein_plot}
\vspace{-0.5em}
\end{wrapfigure}
To select essential state features, we employ the variance-based feature importance method to filter out state features with small variances as they are deemed unimportant. The sorted feature importance for the RoboSumo games is depicted in Figure \ref{app:feat_imp}. Figure \ref{fig:feat_imp} demonstrates that the adversarial policy, derived from our generalized attack framework, induces a much smaller state distribution shift compared to the unconstrained adversarial policy when assessed under the same winning rate. Additionally, we quantify the state-distribution shift brought about by the constrained and unconstrained attacks by calculating the Wasserstein-2 distance between their state distributions, as illustrated in Figure \ref{fig:wasserstein_plot}.
\textbf{Our constrained attack results in a significantly lower state distribution shift compared to the unconstrained one.} To confirm that our generalized attack methods, with a regulated attack budget, do indeed produce more stealthy behaviors, we visualize agents with $\epsilon_\pi \in {0.3, 0.7, 1}$; \textbf{here, a smaller $\epsilon_\pi$ induces behaviors that are visually more similar to the system without an attack} (see gifs at \href{https://sites.google.com/view/stealthy-attack}{https://sites.google.com/view/stealthy-attack}). An ablation study on the trade-off between stealthiness and the attacker's performance is also presented in \S\ref{addi_res}, also validating the effectiveness of $\epsilon_\pi$ in regulating the attacker's strength, in line with Proposition \ref{theorem:1}. We show in \S\ref{addi_res}, \textbf{even when there is a mismatch for attack budgets between training time and test time, the victim policy still shows greatly improved robustness.}

\begin{figure}[!t]
  \centering
\begin{subfigure}{0.95\textwidth}
\begin{center}
\includegraphics[width=.99\textwidth]{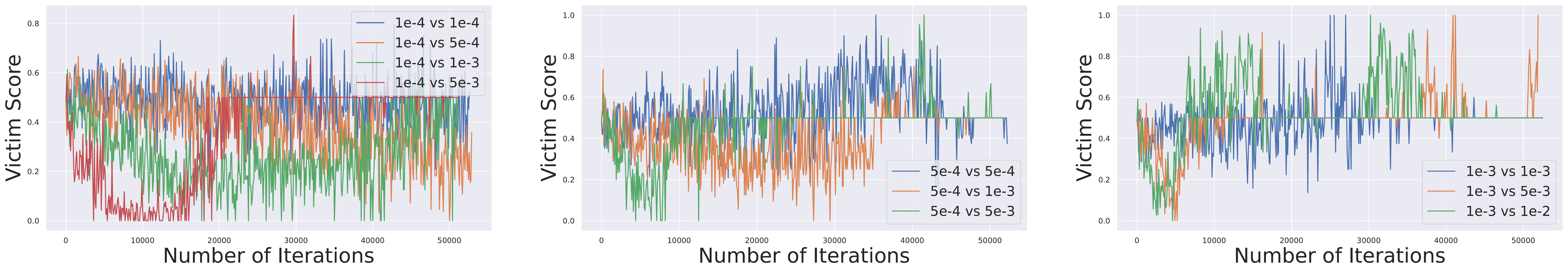}
\end{center}
\vspace{-2mm}
\caption{Score of the victim policy, which is computed by winning rate + tie rate/2. Note here different from other plots, to show the potential oscillation behaviors during training, we show just one seed instead of multiple ones.}
\label{fig:score}
\end{subfigure}
\hfill 
\begin{subfigure}{0.95\textwidth}
\begin{center}
\includegraphics[width=.99\textwidth]{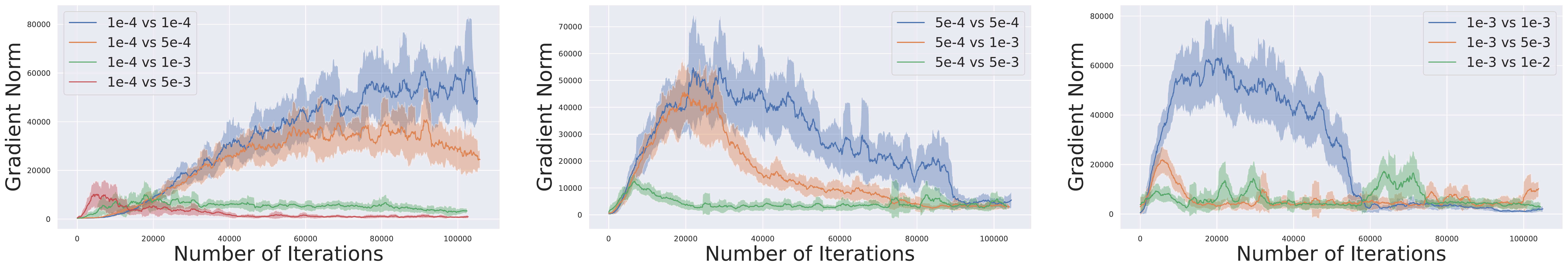}
\end{center}
\vspace{-2mm}
\caption{Norm of the gradient $\nabla_{\pi_\nu}V_{\rho}(\pi_\nu^t, \pi_\alpha^t)$.}
\label{fig:norm}
\end{subfigure}
\hfill
\begin{subfigure}{0.95\textwidth}
\begin{center}
\includegraphics[width=.99\textwidth]{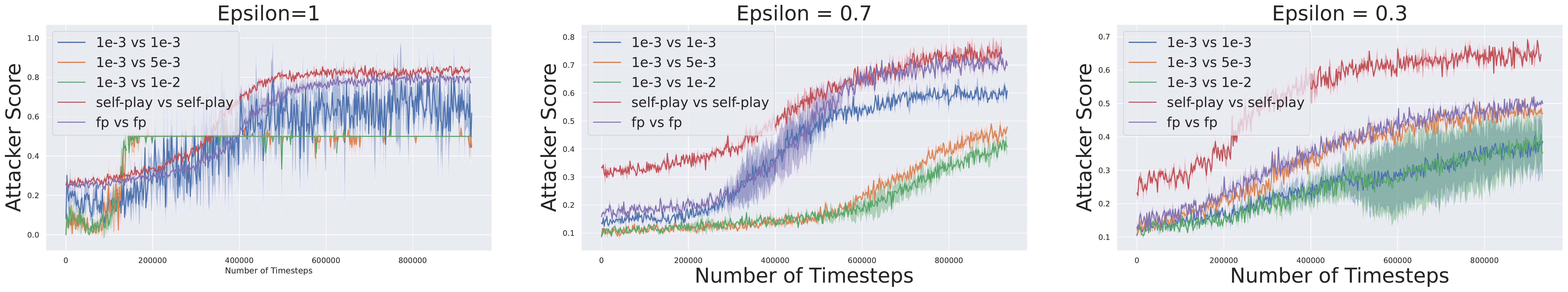}
\end{center}
\vspace{-2mm}
\caption{Score of the attacker against robustified policy (final output of adversarial training). Lower is better.}
\label{fig:attacK_score}
\end{subfigure}
\caption{(a). The score of the victim policy trained with two timescales converges rapidly, while the policy trained only with a single timescale suffers from much more oscillations. (b). The gradient norm trained by two timescales is also much smaller. (c). Under different $\epsilon_\pi = 1, 0.7, 0.3$, when attacking the robustified victim policy, i.e. computing $\min_{\pi_{\alpha}}V_{\rho}(\pi_{\nu}^{\star}, (1-\epsilon_\pi)\hat{\pi}+\epsilon_\pi\pi_\alpha)$ with standard RL algorithm, victim trained by two timescales achieves the lowest exploitability/best robustness.}
\label{exp:robosumo}
\vspace{-7mm}

\end{figure}

\textbf{Adversarial Training with Timescale Separation (Q2\&3).}
To address Q2\&3, we examine the learning dynamics and robustness of policies trained with and without timescale separation, comparing these to other baselines in both Kuhn Poker and Robosumo environments.

\emph{Kuhn Poker.} We implement Algorithm \ref{alg:max} and \ref{alg:tts} under OpenSpiel \citep{lanctot2019openspiel}, with the min oracle achieved through game tree search. For gradient update, we utilize Regret Policy Gradient (RPG) \citep{srinivasan2018actor} and Advantage Actor-Critic (A2C) \citep{mnih2016asynchronous} in lieu of the vanilla policy gradient. Figure \ref{exp:kuhn_poker} illustrates the exploitability of the victim policy $\pi_\nu$, \textbf{where adversarial training with the min oracle exhibits the fastest convergence, lowest exploitability, and least variance.} Meanwhile, policies trained with a sufficient timescale separation parameter ${\eta_{\alpha}^t}/{\eta_{\nu}^t}$ closely approximate the algorithm with a min oracle, outperforming single timescale algorithms. 

\emph{Robosumo Competition.} To demonstrate the scalability of timescale separation, we evaluate our methods on Robosumo—a high-dimensional, continuous control task, representing a significant challenge in terms of both training and evaluation. Although a min oracle with game tree search is unattainable in such a continuous control task, earlier experiments suggest that an ample timescale separation ratio can effectively approximate it. We monitor the score, $V_{\rho}(\pi_{\nu}^t, (1-\epsilon_\pi)\hat{\pi}_\alpha + \epsilon_\pi\pi_{\alpha}^t)$, and the norm of gradient $\nabla_{\pi_\nu}V_{\rho}(\pi_\nu^t, (1-\epsilon_\pi)\hat{\pi}_\alpha + \epsilon_\pi\pi_\alpha^t)$ during adversarial training. As per Figure \ref{fig:score} and \ref{fig:norm}, we ascertain that \textbf{single timescale training results in unstable behaviors, large variance, and gradient norm, while the two timescale training achieves quick convergence and significantly smaller gradient norm.} Crucially, to affirm the robustness of our methods, we compare them with single timescale adversarial training, self-play \citep{bansal2017emergent}, and fictitious-play \citep{heinrich2016deep}. Self-play and fictitious-play-based methods are renowned for training adversarially robust RL agents \citep{pinto2017robust,zhang2021robust,tessler2019action}. We employ PPO to calculate the best response of the finalized robustified victim policy, illustrating the performance of the attacker during the victim attack process in Figure \ref{fig:attacK_score}. Here, a lower winning rate for the attacker signifies enhanced robustness. \textbf{This demonstrates that our adversarial training with two timescales leads to victim policies with enhanced robustness compared to popular baseline methods, single timescale adversarial training, self-play, and fictitious-play.}

 \vspace{-3mm}
 \section{Discussion and limitations} 
\label{sec:discuss}
 \vspace{-2mm}
In this paper, we reassess the threats posed to RL agents by adversarial policies by introducing a generalized attack formulation and develop the first provably efficient defense algorithm, “adversarial training with timescale separation”, with convergence to the most robust policy under mild conditions. Meanwhile, we leave how to scale our formulation to accommodate multiple independent attackers with self-interested adversarial policies as future works. 

\section{Acknowledgement}
Liu, Chakraborty, Sun, and Huang are supported by National Science Foundation NSF-IIS-FAI program, DOD-ONR-Office of Naval Research, DOD Air Force Office of Scientific Research, DOD-DARPA-Defense Advanced Research Projects Agency Guaranteeing AI Robustness against Deception (GARD), Adobe, Capital One and JP Morgan faculty fellowships. The authors also thank Kaiqing Zhang and Soheil Feizi for the valuable discussions at the initial stages.

\bibliography{0_references}
\bibliographystyle{iclr2024_conference}

\appendix
\newpage
\addcontentsline{toc}{section}{Appendix} 
\part{\Large{Appendix for ``Rethinking Adversarial Policies: A Generalized Attack Formulation and Provable Defense in RL''}} 
\parttoc
\section{Additional related work}
\label{app:related}
\textbf{Two-player zero-sum games.} The interaction between the attacker and the victim can be modeled as a two-player zero-sum game. There is a large body of work using RL to solve Nash equilibrium (NE). For example, \cite{sokota2022unified, perolat2021poincare,mcaleer2021xdo,brown2019deep,lockhart2019computing} considers the normal-form and extensive-form games. \cite{heinrich2016deep, lanctot2017unified, mcaleer2020pipeline,perez2021modelling,liu2021towards} deal with more general two-player zero-sum games and propose population-based RL algorithms showing empirical success but lacking provable finite-time guarantees. \cite{daskalakis2020independent,zeng2022regularized} analyze the theory of independent gradient with different learning rates in {\it Markov game}, which is a {\it special case} of our defense problem against {\it unconstrained attack} and serves as the inspiration for us to develop provable adversarial training algorithms.

\textbf{Adversarial attacks on RL policies.}
As deep neural networks are shown to be vulnerable to adversarial attacks~\citep{szegedy2014intriguing,goodfellow2015explaining}, the adversarial robustness of deep RL policies has also attracted increasing attention.
One of the earliest works by \cite{huang2017adversarial} reveals the vulnerability of neural policies by adapting various adversarial attacks from supervised learning to RL policies. 
\cite{lin2019tactics} consider an efficient attack that only perturbs the agent at a subset of time steps.
There has been subsequent research in developing stronger pixel-based attacks \citep{qiaoben2021understanding, pattanaik2017robust, oikarinen2020robust}. 
\cite{zhang2020robust} built the theoretical framework SA-MDP for adversarial state perturbation and proposed the corresponding regularizer for more robust reinforcement learning policies. 
Subsequent work by \cite{sun2021strongest} improves \cite{zhang2020robust} with the framework of PA-MDP for better efficiency.
The majority of related work on adversarial RL focuses on perturbing state observations~\citep{huang2017adversarial,oikarinen2020robust,sun2021strongest}, and assumes that the perturbation is small in $\ell_p$ distance. In contrast, our paper considers the attack generated by other agents in a multi-agent system and does not restrict the perturbation distance in every single step, allowing for more flexible and practical attack models.

\textbf{Adversarial attacks on multi-agent RL (MARL).}
\cite{gleave2019adversarial} investigate adversarial policies in a two-player zero-sum game, where a victim can be exploited and significantly misled by the opponent's changed behavior. 
\cite{guo2021adversarial} remove the zero-sum assumption in \cite{gleave2019adversarial} and construct a new formulation for the adversarial policy.
\cite{lin2020robustness} study adversarial attacks in cooperative MARL systems and reveal that attacking one of the agents in the team can greatly reduce the total reward. 
However, these adversarial policies are unconstrained and could cause abnormal behaviors that are easily detectable, while our attack model can be made stealthy by restricting the state distribution shifts.
We also propose provably robust defense algorithms to learn a policy that is unexploitable. 

\textbf{Attacks and defenses on communication in MARL.}
There is also a line of work studying attacks and defenses on communications in MARL~\citep{blumenkamp2020emergence,tu2021adversarial,mitchell2020gaussian,xue2022misspoke}, where the communication among cooperative agents could be perturbed to influence the decisions of victims.
However, we consider adversarial behaviors or policies of other agents, which affect the victim by the actions taken by other agents. 

\textbf{Provably robust defenses.}
To provide guaranteed robustness for deep neural networks, many approaches have been developed to certify the performance of neural networks, including semidefinite programming-based defenses~\citep{raghunathan2018certified,raghunathan2018semidefinite}, convex relaxation of neural networks~\citep{gowal2019scalable,zhang2018finding,wong2018provable,zhang2020towards,gowal2018effectiveness}, randomized smoothing of a classifier~\citep{cohen2019certified,hayes2020extensions}, etc. 
In an effort to certify RL agents' robustness, some approaches~\citep{lutjens2020certified,zhang2020robust,oikarinen2020robust,fischer2019online} apply network certification tools to bound the Q networks. \cite{kumar2021policy} and \cite{wu2021crop} apply randomized smoothing~\citep{cohen2019certified} to RL to achieve provable robustness. These 
defenses mainly focus on the adversarial perturbations directly applied to the agent's inputs.
\cite{sun2022certifiably} propose a certifiable defense against adversarial communication in MARL systems. 
To the best of our knowledge, our paper is the first to provide provable convergence guarantees for adversarial training against adversarial attacks on the behaviors of other agents in the environment.

\textbf{Improving policy robustness by adversarial training.} 
Prior work shows that the competition between the victim agent and the adversary can be regarded as a two-player zero-sum game while training agents with learned adversarial attacks can improve the robustness of the victim. Such an adversarial training paradigm has been shown effective under state perturbations~\citep{zhang2021robust,sun2021strongest,liang2022efficient} and action perturbations~\citep{pinto2017robust,tessler2019action} on a single victim. 
In the context of perturbing the actions of the victim, \cite{tessler2019action} presents a two-player policy iteration algorithm that is proved to converge to the Nash Equilibrium. In contrast, we consider the adversarial behaviors of other agents, and we provide both theoretical guarantees and empirical evidence for the effectiveness of our adversarial training. 

\section{Relationship between NE and robustness}\label{sec:ne_robust}
To understand the relationship between NE and robustness, we formally define the NE as follows.
\begin{definition}[Nash equilibrium]\label{def:ne}
	Fix $\hat{\pi}_\alpha$ and $\epsilon_\pi\ge 0$. We say a pair of policy $(\pi_\nu^\star, \pi_\alpha^\star)$ the Nash equilibrium for the zero-sum game between the victim and the attacker if it holds that for any $\pi_\nu^\prime$ and $\pi_\alpha^\prime$:
	\begin{align}
		\label{eq:ne1}V_\rho(\pi_\nu^\star, (1-\epsilon_\pi)\hat{\pi}_\alpha + \epsilon_\pi \pi_\alpha^\star)\ge V_\rho(\pi_\nu^\prime, (1-\epsilon_\pi)\hat{\pi}_\alpha + \epsilon_\pi\pi_\alpha^\star),\\
		\label{eq:ne2}V_\rho(\pi_\nu^\star, (1-\epsilon_\pi)\hat{\pi}_\alpha + \epsilon_\pi\pi_\alpha^\star)\le V_\rho(\pi_\nu^\star, (1-\epsilon_\pi)\hat{\pi}_\alpha + \epsilon_\pi\pi_\alpha^\prime).
		\end{align}
\end{definition}
Now we have the following proposition showing the robustness of the NE.
\begin{proposition}
	Fix $\hat{\pi}_\alpha$ and $\epsilon_\pi\in [0, 1]$. If $(\pi_\nu^\star, \pi_\alpha^\star)$ is the Nash equilibrium for the zero-sum game between the victim and the attacker, then $\pi_\nu^\star$ is the minimizer for the function $\operatorname{Expl}(\pi_\nu^\prime)$.
\end{proposition}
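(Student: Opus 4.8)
The plan is to unwind the definition of $\operatorname{Expl}$ and chain the two defining Nash inequalities. Recall the notation $J_{\epsilon_\pi}(\pi_\nu, \pi_\alpha) = V_{\rho}(\pi_\nu, (1-\epsilon_\pi)\hat{\pi}_\alpha + \epsilon_\pi\pi_\alpha)$, so that $\operatorname{Expl}(\pi_\nu) = -\min_{\pi_\alpha} J_{\epsilon_\pi}(\pi_\nu, \pi_\alpha)$. Since the minus sign reverses the order of extremization, showing that $\pi_\nu^\star$ minimizes $\operatorname{Expl}$ is equivalent to showing that $\pi_\nu^\star$ maximizes the guaranteed-value function $g(\pi_\nu) := \min_{\pi_\alpha} J_{\epsilon_\pi}(\pi_\nu, \pi_\alpha)$, i.e. $g(\pi_\nu^\prime) \le g(\pi_\nu^\star)$ for every $\pi_\nu^\prime$.

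First I would use inequality (\ref{eq:ne2}), which states exactly that $\pi_\alpha^\star$ attains the inner minimum against $\pi_\nu^\star$; hence $g(\pi_\nu^\star) = \min_{\pi_\alpha} J_{\epsilon_\pi}(\pi_\nu^\star, \pi_\alpha) = J_{\epsilon_\pi}(\pi_\nu^\star, \pi_\alpha^\star)$. Next, for an arbitrary victim policy $\pi_\nu^\prime$, I would bound its guaranteed value from above by evaluating the inner minimum at the particular attacker $\pi_\alpha^\star$: $g(\pi_\nu^\prime) = \min_{\pi_\alpha} J_{\epsilon_\pi}(\pi_\nu^\prime, \pi_\alpha) \le J_{\epsilon_\pi}(\pi_\nu^\prime, \pi_\alpha^\star)$. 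Finally, inequality (\ref{eq:ne1}) gives $J_{\epsilon_\pi}(\pi_\nu^\prime, \pi_\alpha^\star) \le J_{\epsilon_\pi}(\pi_\nu^\star, \pi_\alpha^\star) = g(\pi_\nu^\star)$. Chaining these three relations yields $g(\pi_\nu^\prime) \le g(\pi_\nu^\star)$ for all $\pi_\nu^\prime$, which translates back to $\operatorname{Expl}(\pi_\nu^\star) \le \operatorname{Expl}(\pi_\nu^\prime)$, the desired claim.

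There is essentially no serious obstacle here: the statement is the classical fact that the leader (victim) component of any Nash equilibrium of a zero-sum game solves the associated max--min problem, and it reduces to a single three-term chain of inequalities. The only points warranting care are bookkeeping ones --- tracking the sign flip so that \emph{minimizing} exploitability is correctly identified with \emph{maximizing} $g$, and noting that the inner minimum defining $g(\pi_\nu^\star)$ is genuinely attained at $\pi_\alpha^\star$ by virtue of (\ref{eq:ne2}), so that the equality $g(\pi_\nu^\star) = J_{\epsilon_\pi}(\pi_\nu^\star, \pi_\alpha^\star)$ holds with no separate existence argument for the minimizer required.
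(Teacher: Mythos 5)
Your proof is correct and follows essentially the same route as the paper's: both rest entirely on the two defining Nash inequalities (\ref{eq:ne1}) and (\ref{eq:ne2}). The only difference is cosmetic --- the paper first establishes that the equilibrium value equals both $\max_{\pi_\nu}\min_{\pi_\alpha}$ and $\min_{\pi_\alpha}\max_{\pi_\nu}$ via weak duality and then reads off the claim, whereas you chain the three inequalities directly, which is if anything slightly more economical.
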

\begin{proof}
		According to Definition \ref{def:ne}, it holds that
	\begin{align}\label{eq:minmax}
		\min_{\pi_\alpha^\prime}V_{\rho}(\pi_{\nu}^\star, (1-\epsilon_\pi)\hat{\pi}_\alpha + \epsilon_\pi\pi_{\alpha}^\prime)=V_{\rho}(\pi_{\nu}^\star, (1-\epsilon_\pi)\hat{\pi}_\alpha + \epsilon_\pi\pi_{\alpha}^\star) = \max_{\pi_\nu^\prime} V_{\rho}(\pi_{\nu}^\prime, (1-\epsilon_\pi)\hat{\pi}_\alpha + \epsilon_\pi\pi_{\alpha}^\star).
	\end{align}
	Combining Equation \eqref{eq:minmax} with the fact that for any function $f$ it holds that $\min_x\max_y f(x, y)\ge \max_y\min_x f(x, y)$, we conclude that
	\begin{align*}
		V_{\rho}(\pi_{\nu}^\star, (1-\epsilon_\pi)\hat{\pi}_\alpha + \epsilon_\pi\pi_{\alpha}^\star) &= \max_{\pi_\nu^\prime}\min_{\pi_\alpha^\prime} V_\rho (\pi_\nu^\prime, (1-\epsilon_\pi)\hat{\pi}_\alpha + \epsilon_\pi\pi_{\alpha}^\prime)\\
			&= \min_{\pi_\alpha^\prime} \max_{\pi_\nu^\prime} V_\rho (\pi_\nu^\prime, (1-\epsilon_\pi)\hat{\pi}_\alpha + \epsilon_\pi\pi_{\alpha}^\prime).
	\end{align*}
	Now given an NE pair $(\pi_\nu^\star, \pi_\alpha^\star)$, we consider the  exploitability of $\pi_\nu^\star$:		\begin{align*}
		\operatorname{Expl}(\pi_\nu^\star) &= -\min_{\pi_\alpha} V_{\rho}(\pi_{\nu}^\star, (1-\epsilon_\pi)\hat{\pi}_\alpha + \epsilon_\pi\pi_{\alpha})\\
		&=-V_{\rho}(\pi_{\nu}^\star, (1-\epsilon_\pi)\hat{\pi}_\alpha + \epsilon_\pi\pi_{\alpha}^\star)\\
		&= \min_{\pi_\nu^\prime}-\min_{\pi_\alpha^\prime} V_\rho (\pi_\nu^\prime, (1-\epsilon_\pi)\hat{\pi}_\alpha + \epsilon_\pi\pi_{\alpha}^\prime)\\
		&= \min_{\pi_\nu^\prime} \operatorname{Expl}(\pi_\nu^\prime).
	\end{align*}
\end{proof}

Therefore, the solution concept NE guarantees the most robust policy for the victim. Indeed, it is stronger than the goal of the most robust policy since by following exactly the same procedure, one can verify NE also guarantees the most robust policy for the {\it attacker}. However, as we remarked previously, we only care about the robustness of the victim.
\section{Motivation and examples of timescale separation}
\label{sec:motivation}

To motivate the necessity of timescale separation in adversarial training for robust RL policy, we revisit some known issues of naive single-timescale methods including Gradient Descent Ascent (GDA) and iterative best response (IBR) with both simultaneous and alternating update using a simple normal-form game Rock-Paper-Scissor, which corresponds to our defense problem with a single state and $\epsilon_\pi = 1$.
\begin{example}
The zero-sum game Rock-Paper-Scissor includes two players with the same action space $\mathcal{A} = \{Rock, Paper, Scissor\}$. The payoff matrix $\mathbf{P}$ is given as $\mathbf{P}=\left[\begin{array}{ccc}0 & 1 & -1 \\ -1 & 0 & 1 \\ 1 & -1 & 0\end{array}\right]$ for the row player. The row player has the mixed strategy $x\in \mathcal{X}=\Delta(\mathcal{A})$, where $x_i$ represents the probability of choosing $i^{th}$ action. The column player holds a similar mixed strategy $y\in \mathcal{Y}$. The corresponding payoff is given by $V(x, y) = x^{\top}\mathbf{P}y$. Our objective is given by $\max_{x\in \mathcal{X}}\min_{y\in \mathcal{Y}}x^{\top}\mathbf{P}y$. Note here ${\epsilon_\pi} = 1$.
\end{example}

Formally, we compare the following $5$ methods; the first $4$ methods have only a single timescale while the last GAMin method highlights timescale separation, where one player takes a gradient step, and then another one takes the best response.
\begin{itemize}
\setlength\itemsep{0.01em}
    \item Simultaneous gradient descent ascent (SGDA): $y_{t+1} = \mathcal{P}_{\mathcal{Y}}(y_t - \eta \mathbf{P}^{\top}x_t)$, $x_{t+1} = \mathcal{P}_{\mathcal{X}}(x_t + \eta \mathbf{P}y_t)$.
    \item Alternate gradient descent ascent (AGDA): $y_{t+1} = \mathcal{P}_{\mathcal{Y}}(y_t - \eta \mathbf{P}^{\top}x_t)$, $x_{t+1} = \mathcal{P}_{\mathcal{X}}(x_t + \eta \mathbf{P}y_{t+1})$
    \item Simultaneous iterative best response (SIBR): $y_{t+1} = \arg\min_{y\in\mathcal{Y}} x_{t}^{\top}\mathbf{P}y$, $x_{t+1} = \arg\min_{x\in\mathcal{X}} x^{\top}\mathbf{P}y_t$
    \item Alternate iterative best response (AIBR): $y_{t+1} = \arg\min_{y\in\mathcal{Y}} x_{t}^{\top}\mathbf{P}y$, $x_{t+1} = \arg\min_{x\in\mathcal{X}} x^{\top}\mathbf{P}y_{t+1}$.
    \item (With timescale separation) Gradient ascent with min oracle (GAMin): 
     $y_{t+1} = \arg\min_{y\in\mathcal{Y}} x_{t}^{\top}\mathbf{P}y$, $x_{t+1} = \mathcal{P}_{\mathcal{X}}(x_t + \eta \mathbf{P}y_{t+1})$.
\end{itemize}
%

\begin{figure}[!htbp]
   \centering
   \includegraphics[width=0.5\columnwidth]{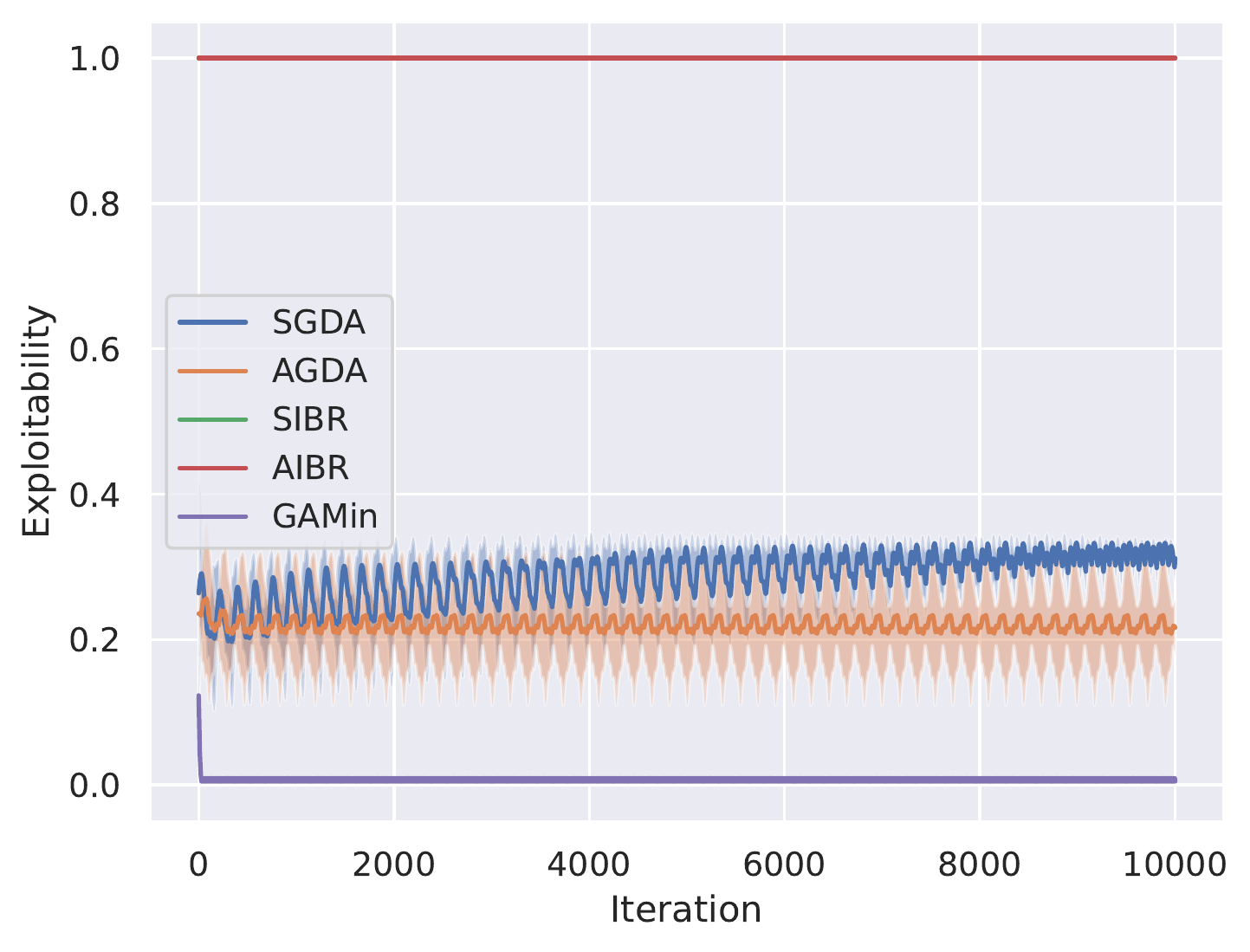}
   \caption{Exploitability test on Rock-Paper-Scissor. Note that the green line is overlapped with the red. 
   } 
   \label{fig:toy}
\end{figure}

We show the exploitability of the $x$ player during the learning process in Figure \ref{fig:toy}. It is clear that the first four single-timescale training methods (SGDA, AGDA, SIBR, AIBR)
fail to achieve low exploitability, while only GAMin achieves the near-optimal exploitability $0$. 

%

\section{Full proof}\label{sec:full_proof}

\subsection{Proof of Proposition~\ref{theorem:1} and \ref{theorem:2_new}}\label{app:1}
\begin{proof}
For simplicity, we shall prove Proposition \ref{theorem:2_new} first, and then prove Proposition \ref{theorem:1}.
 
Let us first review the following facts for any joint policy $\bm{\pi} = (\pi_\nu, \pi_\alpha)$, $\bm{\pi}^{\prime} = (\pi_\nu, \pi_\alpha^{\prime})$ such that $D_{\operatorname{TV}}^{\max}(\pi_{\alpha}||\pi_{\alpha}^{\prime})\le \epsilon_\pi$ and the transition matrix $P_{\bm{\pi}}$, where $P_{\bm{\pi}}(s^{\prime}, s) = \sum_{\bm{a}}\bm{\pi}(\bm{a}|s)P(s^{\prime}|s, \bm{a})$. In the following proof, we use $P_{\bm{\pi}}(i, j)$ to denote $P_{\bm{\pi}}(s_i, s_j)$.
\begin{itemize}
\item $d_\rho^{\bm{\pi}} = (1-\gamma)(I-\gamma P_{\bm{\pi}})^{-1}\rho$.
\item $||P_{\bm{\pi}}||_1=1$ and $||(I-\gamma P_{\bm{\pi}})^{-1}||_1 \le \frac{1}{1-\gamma}$.
\item $||P_{\bm{\pi}}-P_{\bm{\pi}^{\prime}}||_1 \le 2\epsilon_\pi$
\end{itemize}
According to Definition \ref{def:state_dis}, one can verify that $d_\rho^{\bm{\pi}}$ satisfies that:
\begin{align*}
    d_\rho^{\bm{\pi}} = (1-\gamma)\rho + \gamma P_{\bm{\pi}}d_\rho^{\bm{\pi}},
\end{align*}
which gives the solution $d_\rho^{\bm{\pi}} = (1-\gamma)(I-\gamma P_{\bm{\pi}})^{-1}\rho$.

For $P_{\bm{\pi}}$:
\begin{align*}
    ||P&_{\bm{\pi}}||_{1}
    =\max_{j}\sum_{i}|P_{\bm{\pi}}(i, j)|\\
    =&\max_{j}\sum_{i}\sum_{\bm{a}}\bm{\pi}(\bm{a}|s_j)P(s_i|s_j, \bm{a})\\
    =&\max_{j}\sum_{\bm{a}}\bm{\pi}(\bm{a}|s_j)\sum_{i}P(s_i|s_j, \bm{a})\\
    =&1
\end{align*}
For $||(I-\gamma P_{\bm{\pi}})^{-1}||_1$:
\begin{align*}
    ||&(I-\gamma P_{\bm{\pi}})^{-1}||_1 = || \sum_{k=0}^{\infty}(\gamma P_{\bm{\pi}})^k||_1\le \sum_{k=1}^{\infty}|| (\gamma P_{\bm{\pi}})^k ||_1\le
    \sum_{k=1}^{\infty}\gamma^k||P_{\bm{\pi}}||_1^k = \frac{1}{1-\gamma}.
\end{align*}
For $||P_{\bm{\pi}}-P_{\bm{\pi}^{\prime}}||_1$:

To begin with, since $\pi_\nu$ remains unchanged in our proof, let us abuse the notation a little bit and define the marginalized transition $P(s^{\prime}|s, a_{\alpha}) = \sum_{a_\nu}\pi_{\nu}(a_\nu|s)P(s^\prime|s, a_\nu, a_\alpha)$. We have
\begin{align*}
    ||&P_{\bm{\pi}}-P_{\bm{\pi}^{\prime}}||_1=\max_j\sum_i|P_{\bm{\pi}}(i, j)-P_{\bm{\pi}^{\prime}}(i, j)|\\
    &=\max_j\sum_i|\sum_{a_\alpha}(\pi_{\alpha}(a_\alpha|s_j) - \pi_{\alpha}^{\prime}(a_\alpha|s_j)\sum_{a_\nu}\pi_{\nu}(a_\nu|s_j)P(s_i|s_j, a_\nu, a_\alpha)|\\
    &=\max_j\sum_i|\sum_{a_\alpha}(\pi_{\alpha}(a_\alpha|s_j) - \pi_{\alpha}^{\prime}(a_\alpha|s_j))P(s_i|s_j, a_{\alpha})|.
\end{align*}
Now fix any index $j$, define $\bm{m}_i^{\top} =(P(s_i|s_j, a_\alpha^{k}))_{k=1}^{|\mathcal{A}_\alpha|}$, $\bm{M}^{\top} = (\bm{m}_1, \cdots, \bm{m}_{|\mathcal{S}|})$, and $\bm{n}^{\top} = (\pi_{\alpha}(a_\alpha^k|s_j) - \pi_{\alpha}^{\prime}(a_\alpha^k|s_j))_{k=1}^{|\mathcal{A}_\alpha|}$. Then the following holds
\begin{align*}
    \sum_i|\sum_{a_\alpha}(\pi_{\alpha}(a_\alpha|s_j) - \pi_{\alpha}^{\prime}(a_\alpha|s_j))P(s_i|s_j, a_{\alpha})| = \sum_i|\bm{m}_i^{\top}\bm{n}|=||\bm{M}\bm{n}||_1\le ||\bm{M}||_1||\bm{n}||_1=2\epsilon_\pi||\bm{M}||_1.
\end{align*}
According to the definition of $\bm{M}$, it is easy to check
\begin{align*}
    ||\bm{M}||_1 = \max_{k}\sum_i |P(s_i|s_j, a_\alpha^{k})| = 1.
\end{align*}
Therefore, we conclude that for any fixed index $j$, we have
\begin{align*}
    \sum_i|\sum_{a_\alpha}(\pi_{\alpha}(a_\alpha|s_j) - \pi_{\alpha}^{\prime}(a_\alpha|s_j))P(s_i|s_j, a_{\alpha})|\le 2\epsilon_\pi,
\end{align*}
which proves $||P_{\bm{\pi}}-P_{\bm{\pi}^{\prime}}||_1\le 2\epsilon_\pi$.

Now we are ready to prove Proposition \ref{theorem:2_new}.
\begin{align*}
    ||&d_\rho^{\bm{\pi}}-d_\rho^{\bm{\pi}^{\prime}}||_1 = ||(1-\gamma)(I-\gamma P_{\bm{\pi}})^{-1}\rho - (1-\gamma)(I-\gamma P_{\bm{\pi}^{\prime}})^{-1}\rho||_1\\
    &\le (1-\gamma)|| (I-\gamma P_{\bm{\pi}})^{-1} - (I-\gamma P_{\bm{\pi}^{\prime}})^{-1} ||_1||\rho||_1\\
    &\le (1-\gamma)||(I-\gamma P_{\bm{\pi}^{\prime}})^{-1}||_1||\gamma(P_{\bm{\pi}} - P_{\bm{\pi}^{\prime}})||_1||(I-\gamma P_{\bm{\pi}})^{-1}||_1||\rho||_1\\
    &\le \frac{2\epsilon_\pi \gamma}{1-\gamma}.
\end{align*}
Now we can use Proposition \ref{theorem:2_new} to prove Proposition \ref{theorem:1}. To begin with, it is easy to verify that the following holds
\begin{align*}
    V_{\rho}(\bm{\pi}) = \sum_{s}d_\rho^{\bm{\pi}}(s)\sum_{a}\bm{\pi}(\bm{a}|s)r(s, \bm{a}).
\end{align*}
Now let us define the marginalized reward $r_{\bm{\pi}}(s) = \sum_{a}\bm{\pi}(\bm{a}|s)r(s, \bm{a})$, and further define the vector notation $r_{\bm{\pi}}^{\top} = (r_{\bm{\pi}}(s^k))_{k=1}^{|\mathcal{S}|}$. Then for the difference of the value function, it holds that
\begin{align*}
    |&V_{\rho}(\bm{\pi}) - V_{\rho}(\bm{\pi}^{\prime})| = \frac{1}{1-\gamma}|\langle d_\rho^{\bm{\pi}}, r_{\bm{\pi}}\rangle - \langle d_\rho^{\bm{\pi}^{\prime}}, r_{\bm{\pi}^{\prime}}\rangle|\\
    &=\frac{1}{1-\gamma}|\langle d_\rho^{\bm{\pi}}, r_{\bm{\pi}}\rangle -\langle d_\rho^{\bm{\pi}}, r_{\bm{\pi}^{\prime}}\rangle + \langle d_\rho^{\bm{\pi}}, r_{\bm{\pi}^{\prime}}\rangle - \langle d_\rho^{\bm{\pi}^{\prime}}, r_{\bm{\pi}^{\prime}}\rangle|\\
    &\le \frac{1}{1-\gamma}(|\langle d_\rho^{\bm{\pi}}, r_{\bm{\pi}}\rangle -\langle d_\rho^{\bm{\pi}}, r_{\bm{\pi}^{\prime}}\rangle| + |\langle d_\rho^{\bm{\pi}}, r_{\bm{\pi}^{\prime}}\rangle - \langle d_\rho^{\bm{\pi}^{\prime}}, r_{\bm{\pi}^{\prime}}\rangle|)\\
    &\le \frac{1}{1-\gamma}(||d_\rho^{\bm{\pi}}||_1||r_{\bm{\pi}} - r_{\bm{\pi}^\prime} ||_{\infty} + ||d_\rho^{\bm{\pi}} - d_\rho^{\bm{\pi}^\prime}||_1||r_{\bm{\pi}^\prime}||_{\infty})\\
    &\le \frac{1}{1-\gamma}(2\epsilon_\pi + \frac{2\epsilon_\pi\gamma}{1-\gamma})\\
    &\le \frac{2\epsilon_\pi}{(1-\gamma)^2}.
\end{align*}
\end{proof}
\subsection{Proof of Proposition \ref{prop:detection}}
\begin{proof}
	For any fixed $s\in\cS$ and $a_\nu\in\cA_\nu$, the channel that produces the distribution of next state $s^\prime$ based on the input distribution of $a_\alpha$ is exactly $P(\cdot\given s, a_\nu, \cdot)$. Therefore, applying the data processing inequality for $f$-divergence, we prove our proposition.
\end{proof}

\subsection{Proof of Theorem \ref{thm:max}}
In the following discussions, we will use $J_{\epsilon_\pi}(\nu, \alpha)$ and $J_{\epsilon_\pi}(\pi_\nu, \pi_\alpha)$ interchangeably according to Definition \ref{def:param}. Once we have a bounded mismatch coefficient in Definition \ref{assump:c_g}, we are ready to analyze the properties of the function $J_{\epsilon_\pi}(\nu, \alpha)$ in the following lemma.

\begin{lemma}\label{lemma:three}
For any $\nu, \nu^\prime\in \Delta(\cA_\nu)^{|\cS|}$ and $\alpha, \alpha^\prime\in \Delta(\cA_\alpha)^{|\cS|}$,
the function $J_{\epsilon_\pi}$ satisfies 
\begin{itemize}
    \item 
    Lipschitzness 
    \begin{align*}
        \|\nabla_{\nu} J_{\epsilon_\pi}(\nu, \alpha)\|\le \frac{\sqrt{|\cA_\nu|}}{(1-\gamma)^2},\\
        \|\nabla_{\alpha} J_{\epsilon_\pi}(\nu, \alpha)\|\le \frac{\epsilon_\pi\sqrt{|\cA_\alpha|}}{(1-\gamma)^2}.
    \end{align*}
    \item 
    Smoothness
    \begin{align*}
        \| \nabla_{\nu} J_{\epsilon_\pi}(\nu, \alpha) -  \nabla_{\nu} J_{\epsilon_\pi}(\nu^\prime, \alpha^\prime)\|
        &\le \frac{2\sqrt{|\cA_\nu|}}{(1-\gamma)^3}(\sqrt{|\cA_\nu|}\|\nu-\nu^\prime\| + \sqrt{|\cA_\alpha|}\|\alpha-\alpha^\prime\|),\\
        \| \nabla_{\alpha} J_{\epsilon_\pi}(\nu, \alpha) -  \nabla_{\alpha} J_{\epsilon_\pi}(\nu^\prime, \alpha^\prime)\|
        &\le \frac{2{\epsilon_\pi}\sqrt{|\cA_\alpha|}}{(1-\gamma)^3}(\sqrt{|\cA_\nu|}\|\nu-\nu^\prime\| + \sqrt{|\cA_\alpha|}\|\alpha-\alpha^\prime\|).
    \end{align*}
    \item 
    Gradient domination 
    \begin{align}
        &J_{\epsilon_\pi}(\nu, \alpha) -\min_{\alpha^\prime}J_{\epsilon_\pi}(\nu, \alpha^\prime)\le\label{eq:grad_alpha}
        \frac{C_{\cG}^{\epsilon_\pi}}{1-\gamma}\max_{\Bar{\alpha}}\langle \nabla_{\alpha}J_{\epsilon_\pi}(\nu, \alpha), \alpha-\Bar{\alpha}\rangle,
        \\
        &\max_{\nu^\prime}J_{\epsilon_\pi}(\nu^\prime, \alpha) -J_{\epsilon_\pi}(\nu, \alpha)\le\label{eq:grad_nu}
        \frac{C_{\cG}^{\epsilon_\pi}}{1-\gamma}\max_{\Bar{\nu}}\langle \nabla_{\nu}J_{\epsilon_\pi}(\nu, \alpha), \Bar{\nu}-\nu\rangle.
    \end{align}  
\end{itemize}
\end{lemma}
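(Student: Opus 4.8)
The plan is to treat $J_{\epsilon_\pi}$ as a single-agent value while the other player is held fixed, and to port the direct-parameterization toolkit (exact policy gradient, smoothness via a one-dimensional Hessian bound, and gradient domination via the performance difference lemma) to the two-player, attack-budget setting. The one ingredient I would establish first is the exact gradient. Writing $\bm{\pi}=(\pi_\nu,(1-\epsilon_\pi)\hat{\pi}_\alpha+\epsilon_\pi\pi_\alpha)$ for the effective joint policy and $Q^{\bm{\pi}}_\nu,Q^{\bm{\pi}}_\alpha$ for the corresponding state-action value functions, the policy gradient theorem gives
$$\frac{\partial J_{\epsilon_\pi}}{\partial\nu_{s,a}}=\tfrac{1}{1-\gamma}\,d_\rho^{\bm{\pi}}(s)\,Q^{\bm{\pi}}_\nu(s,a),\qquad \frac{\partial J_{\epsilon_\pi}}{\partial\alpha_{s,a}}=\tfrac{\epsilon_\pi}{1-\gamma}\,d_\rho^{\bm{\pi}}(s)\,Q^{\bm{\pi}}_\alpha(s,a),$$
where the extra $\epsilon_\pi$ in the $\alpha$-gradient is the Jacobian $\partial[(1-\epsilon_\pi)\hat{\pi}_\alpha+\epsilon_\pi\alpha]/\partial\alpha$ from the chain rule. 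This single fact propagates the $\epsilon_\pi$-scaling through all three claims.

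For \textbf{Lipschitzness} I would bound the Euclidean norm of each gradient directly. As rewards lie in $[0,1]$ we have $0\le Q^{\bm{\pi}}\le \tfrac{1}{1-\gamma}$, and as $d_\rho^{\bm{\pi}}$ is a probability distribution $\sum_s d_\rho^{\bm{\pi}}(s)^2\le 1$; hence
$$\|\nabla_\nu J_{\epsilon_\pi}\|^2=\tfrac{1}{(1-\gamma)^2}\sum_s d_\rho^{\bm{\pi}}(s)^2\sum_a Q^{\bm{\pi}}_\nu(s,a)^2\le \tfrac{|\cA_\nu|}{(1-\gamma)^4},$$
which is exactly the stated bound, and the $\alpha$-bound is identical up to the factor $\epsilon_\pi$.

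For \textbf{gradient domination} I would invoke the performance difference lemma. Fixing $\alpha$ and taking $\nu^\star\in\Pi_\nu^\star(\alpha)$, the lemma yields $J_{\epsilon_\pi}(\nu^\star,\alpha)-J_{\epsilon_\pi}(\nu,\alpha)=\tfrac{1}{1-\gamma}\sum_s d_\rho^{\nu^\star,\alpha}(s)\sum_a(\nu^\star_{s,a}-\nu_{s,a})A^{\bm{\pi}}_\nu(s,a)$, using that advantages average to zero under $\nu$. I would bound the inner sum by its maximum over $\Delta(\cA_\nu)$, note that $\max_{\bar{\nu}}\langle\nabla_\nu J_{\epsilon_\pi},\bar{\nu}-\nu\rangle=\tfrac{1}{1-\gamma}\sum_s d_\rho^{\nu,\alpha}(s)\max_{\bar{\nu}_s}\sum_a(\bar{\nu}_{s,a}-\nu_{s,a})A^{\bm{\pi}}_\nu(s,a)$, then insert the ratio $d_\rho^{\nu^\star,\alpha}/d_\rho^{\nu,\alpha}$ and use $d_\rho^{\nu,\alpha}(s)\ge(1-\gamma)\rho(s)$ to replace the denominator by $(1-\gamma)\rho$. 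Minimizing over maximizers $\nu^\star\in\Pi_\nu^\star(\alpha)$ and maximizing over $\alpha$ recovers $C_\cG^{\epsilon_\pi}$ and gives inequality \eqref{eq:grad_nu}. The attacker side \eqref{eq:grad_alpha} follows by applying the same argument to $-J_{\epsilon_\pi}$ with the attacker as the maximizing player; the $\epsilon_\pi$ carried by $\nabla_\alpha J_{\epsilon_\pi}$ is precisely the Jacobian relating the raw increment $\alpha-\bar{\alpha}$ to the effective-policy increment, so the lemma applies verbatim.

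The \textbf{smoothness} bound is the main obstacle, being the only place where the $\nu$--$\alpha$ coupling genuinely enters. I would use the one-dimensional reduction: for a joint direction $u=(u_\nu,u_\alpha)$ set $g(t)=J_{\epsilon_\pi}(\theta+tu)$ and bound $|g''(0)|$ by a constant times $\|u\|^2$, which controls the operator norm of the Hessian and hence yields Lipschitz gradients. Splitting the path $(\nu,\alpha)\to(\nu',\alpha)\to(\nu',\alpha')$ turns $\|\nabla_\nu J_{\epsilon_\pi}(\nu,\alpha)-\nabla_\nu J_{\epsilon_\pi}(\nu',\alpha')\|$ into contributions from the $\nu\nu$- and $\nu\alpha$-Hessian blocks, producing the two summands $\sqrt{|\cA_\nu|}\,\|\nu-\nu'\|$ and $\sqrt{|\cA_\alpha|}\,\|\alpha-\alpha'\|$, with the outer factor $\sqrt{|\cA_\nu|}$ coming from differentiating the $\nu$-indexed output. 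The delicate bookkeeping is the count of $\epsilon_\pi$: the $\alpha\alpha$-block carries $\epsilon_\pi^2$ and the cross block $\epsilon_\pi$, so for the $\alpha$-gradient smoothness I would crudely use $\epsilon_\pi^2\le\epsilon_\pi$ to present the single common prefactor $\epsilon_\pi$ in the stated inequality. I expect the heaviest computation to be the explicit second-derivative estimate: writing $V_\rho(\bm{\pi})=\langle(I-\gamma P_{\bm{\pi}})^{-1}\rho,\,r_{\bm{\pi}}\rangle$, differentiating $(I-\gamma P_{\bm{\pi}})^{-1}$ twice via $\tfrac{d}{dt}(I-\gamma P)^{-1}=\gamma(I-\gamma P)^{-1}(dP)(I-\gamma P)^{-1}$, and controlling the result with $\|(I-\gamma P_{\bm{\pi}})^{-1}\|_1\le\tfrac{1}{1-\gamma}$ and $\|P_{\bm{\pi}}\|_1=1$ (both established in the proof of Proposition~\ref{theorem:2_new}), which is what produces the $(1-\gamma)^{-3}$ scaling.
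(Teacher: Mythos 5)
Your proposal is correct, and the three ingredients you identify (the exact policy gradient with the $\epsilon_\pi$ Jacobian factor, $Q$-boundedness for Lipschitzness, and a distribution-mismatch/performance-difference argument for gradient domination) are exactly what drive the paper's proof; the difference is in how the mixture $(1-\epsilon_\pi)\hat{\pi}_\alpha+\epsilon_\pi\pi_\alpha$ is handled. The paper absorbs the mixture into the model rather than the policy: it defines a modified Markov game $\Tilde{\cG}$ with reward ${r}^{mix}_\nu$ and transition ${P}^{mix}$ obtained by averaging over $\hat{\pi}_\alpha$ with weight $1-\epsilon_\pi$, checks that $J_{\epsilon_\pi}(\pi_\nu,\pi_\alpha)=\Tilde{V}_\rho(\pi_\nu,\pi_\alpha)$ and that the visitation measures coincide, and then cites the smoothness and gradient-domination lemmas of \cite{zhang2021gradient} verbatim for $\Tilde{\cG}$, so that $C_{\Tilde{\cG}}=C_{\cG}^{\epsilon_\pi}$ falls out by construction. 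You instead keep the original game and work with the effective mixed policy directly, re-deriving gradient domination from the performance difference lemma (your insertion of the ratio $d_\rho^{\nu^\star,\alpha}/d_\rho^{\nu,\alpha}$ followed by $d_\rho^{\nu,\alpha}\ge(1-\gamma)\rho$ is precisely the proof of the cited Lemma 3, and your observation that the $\epsilon_\pi$ in $\nabla_\alpha J_{\epsilon_\pi}$ cancels the $\epsilon_\pi$ relating raw to effective policy increments is the key bookkeeping step) and re-proving the smoothness estimate via the one-dimensional Hessian bound on $(I-\gamma P_{\bm{\pi}})^{-1}$, which is how the cited Lemma 19 is itself established. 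The paper's reduction buys brevity and a clean source for every constant; your direct route buys self-containedness at the cost of carrying out the second-derivative computation explicitly, and your handling of the $\epsilon_\pi$ powers in the Hessian blocks (using $\epsilon_\pi^2\le\epsilon_\pi$ to present a single prefactor) is consistent with the stated bounds.
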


\begin{proof}
    With the definition of the function $J_{\epsilon_\pi}$, we have $J_{{\epsilon_\pi}}(\nu, \alpha)=V_{\rho}(\pi_\nu, (1-{\epsilon_\pi})\hat{\pi}_\alpha +{\epsilon_\pi}\pi_\alpha)$. 
    \begin{itemize}
        \item For Lipschitzness, it is easy to compute of gradient of $J_{\epsilon_\pi}$ with respect to $\nu$ and $\alpha$ using chain rules. Let's denote $\pi^{mix}_\alpha = (1-{\epsilon_\pi})\hat{\pi}_\alpha + {\epsilon_\pi}\pi_{\alpha}$. Then the following holds using chain rules and standard policy gradient expression \cite{zhang2021gradient}.
    \begin{align*}
        \frac{\partial J_{\epsilon_\pi}(\nu, \alpha)}{\partial \nu_{s, a_\nu}} = \frac{\partial V_{\rho}(\pi_\nu, \pi^{mix}_\alpha)}{\partial \nu_{s, a}} = \frac{1}{1-\gamma}d_\rho^{\pi_\nu, \pi^{mix}_\alpha}(s) \EE_{a_{\alpha}\sim \pi^{mix}_\alpha}[Q^{\pi_\nu, \pi^{mix}_\alpha}(s, a_{\nu}, a_{\alpha})]\le \frac{d_\rho^{\pi_\nu, \pi^{mix}_\alpha}(s)}{(1-\gamma)^2},\\
        \frac{\partial J_{\epsilon_\pi}(\nu, \alpha)}{\partial \alpha_{s, a_\alpha}} = \frac{\partial V_{\rho}(\pi_\nu, \pi^{mix}_\alpha)}{\partial \alpha_{s, a_\alpha}} = \frac{{\epsilon_\pi}}{1-\gamma}d_\rho^{\pi_\nu, \pi^{mix}_\alpha}(s) \EE_{a_{\nu}\sim {\pi}_\nu}[Q^{\pi_\nu, \pi^{mix}_\alpha}(s, a_{\nu}, a_{\alpha})]\le \frac{\epsilon_\pi d_\rho^{\pi_\nu, \pi^{mix}_\alpha}(s)}{(1-\gamma)^2}.
    \end{align*}
Therefore, we have $\|\nabla_{\nu} J_{\epsilon_\pi}(\nu, \alpha)\|\le \frac{\sqrt{|\cA_\nu|}}{(1-\gamma)^2}, \|\nabla_{\alpha} J_{\epsilon_\pi}(\nu, \alpha)\|\le \frac{{\epsilon_\pi}\sqrt{|\cA_\alpha|}}{(1-\gamma)^2}$.

    \item For Smoothness, it holds that
\begin{align*}
    &\| \nabla_{\nu} J_{\epsilon_\pi}(\nu, \alpha) -  \nabla_{\nu} J_{\epsilon_\pi}(\nu^\prime, \alpha^\prime)\| \\
    &= \| \nabla_{\pi_\nu} V(\pi_\nu, (1-{\epsilon_\pi})\hat{\pi}_\alpha + {\epsilon_\pi}\pi_{\alpha}) -  \nabla_{\pi_\nu} V(\pi_\nu^\prime, (1-{\epsilon_\pi})\hat{\pi}_\alpha + {\epsilon_\pi}\pi_{\alpha}^\prime)\|\\
    &\le \frac{2\sqrt{|\cA_\nu|}}{(1-\gamma)^3}(\sqrt{|\cA_\nu|}\|\nu-\nu^\prime\| + \sqrt{|\cA_\alpha|}\|\alpha-\alpha^\prime\|).
\end{align*}
where the last step comes from Lemma $19$ in \citep{zhang2021gradient}. Similarly, using chain rules, it holds that
\begin{align*}
    &\| \nabla_{\alpha} J_{\epsilon_\pi}(\nu, \alpha) -  \nabla_{\alpha} J_{\epsilon_\pi}(\nu^\prime, \alpha^\prime)\| \\
    &= {\epsilon_\pi}\| \nabla_{\pi_\alpha^{mix}} V(\pi_\nu, \pi_\alpha^{mix}) \bigl\vert_{\pi_\alpha^{mix} = (1-{\epsilon_\pi})\hat{\pi}_\alpha + {\epsilon_\pi}\pi_{\alpha}} -  \nabla_{\pi_\alpha^{mix}} V(\pi_\nu^\prime, {\pi}_\alpha^{mix})\bigl\vert_{\pi_\alpha^{mix} = (1-{\epsilon_\pi})\hat{\pi}_\alpha + {\epsilon_\pi}\pi_{\alpha}^\prime}\|\\
    &\le \frac{2{\epsilon_\pi}\sqrt{|\cA_\alpha|}}{(1-\gamma)^3}(\sqrt{|\cA_\nu|}\|\nu-\nu^\prime\| + \sqrt{|\cA_\alpha|}\|\alpha-\alpha^\prime\|).
\end{align*}
    \item The most challenging step is establishing the gradient domination properties. We will show that our objective $J_{\epsilon_\pi}(\pi_\nu, \pi_\alpha)$ is indeed a value function of some other Markov games $\Tilde{\cG}$ with modified transition and reward compared with the original ${\cG}$, which is defined as for any given $s, a_\nu, a_\alpha, s^\prime$,
    \begin{align*}
        {r}_{\nu}^{mix}(s, a_{\nu}, a_\alpha) &= (1-{\epsilon_\pi})\sum_{a_\alpha^\prime}r_\nu(s, a_\nu, a_\alpha^\prime)\hat{\pi}_\alpha(a_\alpha^\prime\given s) + {\epsilon_\pi} r_{\nu}(s, a_\nu, a_\alpha),\\
        {P}^{mix}(s^\prime|s, a_\nu, a_\alpha) &= (1-{\epsilon_\pi})\sum_{a_{\alpha}^\prime }P(s^\prime\given s, a_\nu, a_\alpha^\prime)\hat{\pi}_\alpha(a_\alpha^\prime\given s) + {\epsilon_\pi} P(s^\prime \given s, a_\nu, a_\alpha).
    \end{align*}
    For this modified Markov game, we denote the corresponding value function as $\Tilde{V}_\rho(\pi_\nu, \pi_\alpha)$, and the stationary state visitation as $\Tilde{d}_{\rho}^{\pi_\nu, \pi_\alpha}$. Then it is easy to verify that for any $\pi_\nu\in\Pi_{\nu}$, $\pi_\alpha\in\Pi_{\alpha}$
    \begin{align}
        &J_{\epsilon_\pi}(\pi_\nu, \pi_\alpha) = \Tilde{V}_\rho (\pi_\nu, \pi_\alpha),\label{eq:V}\\
        &d_\rho^{\pi_\nu, (1-{\epsilon_\pi})\hat{\pi}_\alpha + {\epsilon_\pi} \pi_\alpha}=\Tilde{d}_\rho^{\pi_\nu, \pi_\alpha}\label{eq:d}.
    \end{align}
    Similarly, we define the mismatch coefficient for $\Tilde{\cG}$ as follows:
    \begin{align*}
    C_{\Tilde{\cG}}:=\max \{&\max_{\pi_\nu\in {\Pi}_{\nu}} \min_{\pi_\alpha \in \Tilde{\Pi}_\alpha^{\star}\left(\pi_\nu\right)}\left\|\frac{\Tilde{d}_\rho^{\pi_\nu, {\pi}_\alpha}}{\rho}\right\|_{\infty},\max_{\pi_\alpha\in\Pi_\alpha} \min_{\pi_\nu \in \Tilde{\Pi}_\nu^{\star}\left(\pi_\alpha\right)}\left\|\frac{\Tilde{d}_\rho^{\pi_\nu, \pi_\alpha}}{\rho}\right\|_{\infty}\},
    \end{align*}
    where $\Tilde{\Pi}_\alpha^{\star}\left(\pi_\nu\right):=\arg\min_{\pi_\alpha\in \Pi_{\alpha}} \Tilde{V}_\rho(\pi_\nu, \pi_\alpha)$, and $\Tilde{\Pi}_\nu^{\star}\left(\pi_\alpha\right):=\arg\max_{\pi_\nu\in \Pi_{\nu}} \Tilde{V}_\rho(\pi_\nu, \pi_\alpha)$.
    Next, we use the gradient domination property of the value function of Lemma 3 \citep{zhang2021gradient}. It holds that for any $\pi_\alpha^\prime$ and $\pi_\nu^\prime$:
    \begin{align*}
        &\Tilde{V}_{\rho}(\pi_\nu^\prime, \pi_\alpha) - \Tilde{V}_{\rho}(\pi_\nu, \pi_\alpha)\le \left\|\frac{\Tilde{d}_\rho^{\pi_{\nu}^\prime, \pi_\alpha}}{\Tilde{d}_\rho^{\pi_{\nu}, \pi_\alpha}}\right\|_\infty \max_{\Bar{\pi}_\nu}\langle\nabla_{\pi_\nu}\Tilde{V}_{\rho}(\pi_\nu, \pi_\alpha), \Bar{\pi}_\nu- \pi_\nu\rangle,\\
        &\Tilde{V}_{\rho}(\pi_\nu, \pi_\alpha) - V_{\rho}(\pi_\nu, \pi_\alpha^\prime)\le \left\|\frac{\Tilde{d}_\rho^{\pi_{\nu}, \pi_\alpha^\prime}}{\Tilde{d}_\rho^{\pi_{\nu}, \pi_\alpha}}\right\|_\infty \max_{\Bar{\pi}_\alpha}\langle\nabla_{\pi_\alpha}\Tilde{V}_{\rho}(\pi_\nu, \pi_\alpha), \pi_\alpha-\Bar{\pi}_\alpha\rangle.
    \end{align*}
        To prove Equation \eqref{eq:grad_alpha}, we denote $\pi_\alpha^\star\in \Tilde{\Pi}_\alpha^\star(\pi_\nu)$ to be the policy minimizing the quantity $\left\|\frac{\Tilde{d}_\rho^{\pi_\nu, \pi_\alpha}}{\rho}\right\|_{\infty}$. Then, we have
    \begin{align*}
        J_{\epsilon_\pi}(\nu, \alpha) -\min_{\alpha^\prime}J_{\epsilon_\pi}(\nu, \alpha^\prime)
        &=\Tilde{V}_\rho (\pi_\nu, {\pi}_\alpha) - \min_{\pi_\alpha^\prime}\Tilde{V}_{\rho}(\pi_\nu, {\pi}_\alpha^\prime)\\
        &= \Tilde{V}_\rho (\pi_\nu, {\pi}_\alpha) - \Tilde{V}_{\rho}(\pi_\nu, {\pi}_\alpha^\star)\\
        &\le \left\|\frac{\Tilde{d}_\rho^{\pi_{\nu}, \pi_\alpha^\star}}{\Tilde{d}_\rho^{\pi_{\nu}, \pi_\alpha}}\right\|_\infty \max_{\Bar{\pi}_\alpha}\langle\nabla_{\pi_\alpha}\Tilde{V}_{\rho}(\pi_\nu, \pi_\alpha), \Bar{\pi}_\alpha- \pi_\alpha\rangle\\
        &\le \frac{1}{1-\gamma}\left\|\frac{\Tilde{d}_\rho^{\pi_{\nu}, \pi_\alpha^\star}}{\rho}\right\|_\infty \max_{\Bar{\pi}_\alpha}\langle\nabla_{\pi_\alpha}\Tilde{V}_{\rho}(\pi_\nu, \pi_\alpha), \Bar{\pi}_\alpha- \pi_\alpha\rangle\\
        &\le \frac{1}{1-\gamma}C_{\Tilde{\cG}}\max_{\Bar{\pi}_\alpha}\langle\nabla_{\pi_\alpha}\Tilde{V}_{\rho}(\pi_\nu, \pi_\alpha), \Bar{\pi}_\alpha- \pi_\alpha\rangle.
    \end{align*}
    Note due to Equation \eqref{eq:V} and \eqref{eq:d}, we have $C_{\Tilde{\cG}} = C_{\cG}^{\epsilon_\pi}$, and $\nabla_{\pi_\alpha}\Tilde{V}_{\rho}(\pi_\nu, \pi_\alpha) = \nabla_{\pi_\alpha} J_{\epsilon_\pi}(\pi_\nu, \pi_\alpha) = \nabla_{\alpha} J_{\epsilon_\pi}(\nu,\alpha)$, proving Equation \eqref{eq:grad_alpha}.

    To prove Equation \eqref{eq:grad_nu}, we denote $\pi_\nu^\star\in \Tilde{\Pi}_\nu^\star(\pi_\alpha)$ to be the policy minimizing the quantity $\left\|\frac{\Tilde{d}_\rho^{\pi_\nu, \pi_\alpha}}{\rho}\right\|_{\infty}$. Therefore, we have that
    \begin{align*}
        \max_{\nu^\prime}J_{\epsilon_\pi}(\nu^\prime, \alpha) -J_{\epsilon_\pi}(\nu, \alpha)
        &=\max_{\pi_\nu^\prime}\Tilde{V}_\rho (\pi_\nu^\prime, {\pi}_\alpha) - \Tilde{V}_{\rho}(\pi_\nu, {\pi}_\alpha)\\
        &= \Tilde{V}_\rho (\pi_\nu^\star, {\pi}_\alpha) - \Tilde{V}_{\rho}(\pi_\nu, {\pi}_\alpha)\\
        &\le \left\|\frac{\Tilde{d}_\rho^{\pi_{\nu}^\star, \pi_\alpha}}{\Tilde{d}_\rho^{\pi_{\nu}, \pi_\alpha}}\right\|_\infty \max_{\Bar{\pi}_\nu}\langle\nabla_{\pi_\nu}\Tilde{V}_{\rho}(\pi_\nu, \pi_\alpha), \Bar{\pi}_\nu- \pi_\nu\rangle\\
        &\le \frac{1}{1-\gamma}\left\|\frac{\Tilde{d}_\rho^{\pi_{\nu}^\star, \pi_\alpha}}{\rho}\right\|_\infty \max_{\Bar{\pi}_\nu}\langle\nabla_{\pi_\nu}\Tilde{V}_{\rho}(\pi_\nu, \pi_\alpha), \Bar{\pi}_\nu- \pi_\nu\rangle\\
        &\le \frac{1}{1-\gamma}C_{\Tilde{\cG}}\max_{\Bar{\pi}_\nu}\langle\nabla_{\pi_\nu}\Tilde{V}_{\rho}(\pi_\nu, \pi_\alpha), \Bar{\pi}_\nu- \pi_\nu\rangle.
    \end{align*}
    Note due to Equation \eqref{eq:V} and \eqref{eq:d}, we have $C_{\Tilde{\cG}} = C_{\cG}^{\epsilon_\pi}$, and $\nabla_{\pi_\nu}\Tilde{V}_{\rho}(\pi_\nu, \pi_\alpha) = \nabla_{\pi_\nu} J_{\epsilon_\pi}(\pi_\nu, \pi_\alpha) = \nabla_{\nu} J_{\epsilon_\pi}(\nu,\alpha)$, proving Equation \eqref{eq:grad_nu}.
    \end{itemize}
\end{proof}

Before proving Theorem \ref{thm:max}, we need the following additional lemmas. Firstly, we define $\phi(\cdot):=\min_{\alpha}J_{\epsilon_\pi}(\cdot, \alpha)$, and the Moreau envelope for any $\lambda>0$ as $\phi_\lambda(\nu):=\max_{\nu^\prime} \phi(\nu^\prime) -\frac{1}{2\lambda}\|\nu-\nu^\prime\|^2$.
\begin{lemma}[Theorem 31 \citep{jin2020local}]\label{lemma:gd_max}
Suppose the function $J_{\epsilon_\pi}$ is $l$-smooth and $L$-Lipschitz. Then the output of Algorithm \ref{alg:max} with step size $\eta^t = \frac{1}{\sqrt{T+1}}$ satisfies
\begin{align*}
    \frac{1}{T}\sum_{t=1}^T\|\grad \phi_{1/2l}(\nu^t)\|^2\le 2\cdot \frac{\max_{\nu}\phi(\nu) -\phi(\nu^0)+ lL^2}{\sqrt{T+1}}.
\end{align*}
\end{lemma}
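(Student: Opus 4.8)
The plan is to treat Lemma~\ref{lemma:gd_max} as a convergence statement for projected gradient ascent on the weakly concave function $\phi(\nu):=\min_\alpha J_{\epsilon_\pi}(\nu,\alpha)$, and to reconstruct the Moreau-envelope analysis underlying Theorem~31 of \citep{jin2020local}. Under the lemma's standing hypotheses that $J_{\epsilon_\pi}$ is $l$-smooth and $L$-Lipschitz (both established in Lemma~\ref{lemma:three}), I would first record two structural facts. Since each map $\nu\mapsto J_{\epsilon_\pi}(\nu,\alpha)+\tfrac{l}{2}\|\nu\|^2$ is concave, taking the pointwise minimum over $\alpha$ preserves concavity, so $\phi$ is $l$-weakly concave; and the bound $\|\nabla_\nu J_{\epsilon_\pi}\|\le L$ makes $\phi$ itself $L$-Lipschitz. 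Finally, because the min-oracle in line~3 returns an \emph{exact} minimizer $\alpha^t$ over the compact simplex $\Delta(\cA_\alpha)^{|\cS|}$, Danskin's theorem guarantees that $g^t:=\nabla_\nu J_{\epsilon_\pi}(\nu^t,\alpha^t)$ is a supergradient of $\phi$ at $\nu^t$; hence line~4 is exactly a projected supergradient-ascent step on $\phi$.

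Next I would set up the Moreau envelope $\phi_{1/2l}$ together with its proximal point $\hat\nu^t:=\arg\max_{\nu'\in\Pi_\nu}\{\phi(\nu')-l\|\nu^t-\nu'\|^2\}$. The objective being maximized here is strongly concave (the $l$ weak concavity of $\phi$ is dominated by the $2l$ curvature of the penalty), so $\hat\nu^t$ is unique and the standard identity $\nabla\phi_{1/2l}(\nu^t)=2l(\hat\nu^t-\nu^t)$ holds. Consequently $\|\nabla\phi_{1/2l}(\nu^t)\|$ is small exactly when $\nu^t$ sits near a genuine near-stationary (and, by the envelope theory, near-optimal) point, which is why it is the correct progress measure to track.

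The heart of the argument is a one-step improvement inequality. I would lower-bound $\phi_{1/2l}(\nu^{t+1})$ by evaluating its defining maximum at the feasible point $\hat\nu^t$, giving $\phi_{1/2l}(\nu^{t+1})\ge \phi(\hat\nu^t)-l\|\nu^{t+1}-\hat\nu^t\|^2$. Using nonexpansiveness of $\mathcal{P}_{\Pi_\nu}$ against the feasible anchor $\hat\nu^t$, I replace $\nu^{t+1}$ by $\nu^t+\eta^t g^t$ and expand the square; the cross term is then controlled from below by the weak-concavity inequality $\langle g^t,\hat\nu^t-\nu^t\rangle\ge \phi(\hat\nu^t)-\phi(\nu^t)-\tfrac{l}{2}\|\hat\nu^t-\nu^t\|^2$, while $\|g^t\|\le L$ bounds the quadratic term. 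Combining these with $\phi(\hat\nu^t)-\phi(\nu^t)\ge l\|\hat\nu^t-\nu^t\|^2$ (which is just $\phi_{1/2l}(\nu^t)\ge\phi(\nu^t)$) and the gradient identity yields a recursion of the form $\phi_{1/2l}(\nu^{t+1})\ge \phi_{1/2l}(\nu^t)+\tfrac{\eta^t}{4}\|\nabla\phi_{1/2l}(\nu^t)\|^2-lL^2(\eta^t)^2$. Telescoping, bounding $\phi_{1/2l}(\nu^{T+1})\le\max_\nu\phi(\nu)$ and $\phi_{1/2l}(\nu^0)\ge\phi(\nu^0)$, and substituting $\eta^t=1/\sqrt{T+1}$ produces the advertised $O\big((\max_\nu\phi(\nu)-\phi(\nu^0)+lL^2)/\sqrt{T+1}\big)$ bound on the average squared Moreau gradient, with the precise constant following by optimizing the envelope parameter.

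The main obstacle, and the step I would be most careful about, is the interface between the \emph{constrained} update and the weak-concavity estimates: I must ensure the supergradient inequality is applied correctly at the possibly-boundary prox point $\hat\nu^t$, and that nonexpansiveness of $\mathcal{P}_{\Pi_\nu}$ is invoked against a feasible anchor so that projection never degrades the recorded ascent. A secondary subtlety is justifying that the min-oracle output is an exact minimizer, so that Danskin's theorem applies cleanly; were only an approximate minimizer available, $g^t$ would be an inexact supergradient carrying an error term that must be propagated through the recursion --- precisely the complication that the two-timescale scheme of Algorithm~\ref{alg:tts} and the second half of Theorem~\ref{thm:max} are built to absorb.
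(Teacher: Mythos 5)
Your proposal is correct in substance, but note that the paper does not prove this lemma at all: it is imported verbatim as Theorem 31 of \cite{jin2020local} (the citation \emph{is} the proof), so what you have written is a reconstruction of that external result rather than an alternative to an argument appearing in the paper. Your reconstruction is the standard Moreau-envelope analysis and it does go through: $l$-weak concavity of $\phi=\min_\alpha J_{\epsilon_\pi}(\cdot,\alpha)$, the supergradient property of $g^t=\nabla_\nu J_{\epsilon_\pi}(\nu^t,\alpha^t)$ at an exact minimizer $\alpha^t$, the identity $\nabla\phi_{1/2l}(\nu^t)=2l(\hat\nu^t-\nu^t)$, the one-step inequality obtained by lower-bounding $\phi_{1/2l}(\nu^{t+1})$ at the feasible anchor $\hat\nu^t$ together with nonexpansiveness of $\mathcal{P}_{\Pi_\nu}$, and telescoping. (In fact you do not need Danskin at all: $\phi(\nu')\le J_{\epsilon_\pi}(\nu',\alpha^t)\le J_{\epsilon_\pi}(\nu^t,\alpha^t)+\langle g^t,\nu'-\nu^t\rangle+\tfrac{l}{2}\|\nu'-\nu^t\|^2=\phi(\nu^t)+\langle g^t,\nu'-\nu^t\rangle+\tfrac{l}{2}\|\nu'-\nu^t\|^2$ follows directly from $l$-smoothness in $\nu$ and exactness of the min oracle, which sidesteps any differentiability concerns for $\phi$.) Two small blemishes to fix: the map that is concave is $\nu\mapsto J_{\epsilon_\pi}(\nu,\alpha)-\tfrac{l}{2}\|\nu\|^2$, not $J_{\epsilon_\pi}(\nu,\alpha)+\tfrac{l}{2}\|\nu\|^2$ (adding the convex quadratic yields convexity, not concavity), although the weak-concavity inequality you actually invoke later is the correct one; and your recursion $\phi_{1/2l}(\nu^{t+1})\ge\phi_{1/2l}(\nu^t)+\tfrac{\eta^t}{4}\|\nabla\phi_{1/2l}(\nu^t)\|^2-lL^2(\eta^t)^2$ telescopes to a leading constant of $4$ rather than the $2$ stated in the lemma, so ``optimizing the envelope parameter'' is doing real work you would need to carry out explicitly --- or simply accept the weaker constant, which changes nothing downstream in Theorem~\ref{thm:max} since it is absorbed into the polynomial dependence.
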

\begin{lemma}[Lemma 12 \citep{daskalakis2020independent}]\label{lemma:moreau}
    Suppose $J_{\epsilon_\pi}$ is $l$-smooth and $L$-Lipschitz. If there is some $u_\nu$ such that for any $\nu$, $\alpha$
    \begin{align*}
        \max_{\nu^\star}J_{\epsilon_\pi}(\nu^\star, \alpha) - J_{\epsilon_\pi}(\nu, \alpha)\le u_\nu \max_{\Bar{\nu}}\langle \nabla_{\nu} J_{\epsilon_\pi}(\nu, \alpha),\Bar{\nu}-\nu \rangle,
    \end{align*}
    then it holds that
    \begin{align*}
        \max_{\nu^\star}\phi(\nu^\star) - \phi(\nu)\le (u_\nu + \frac{L}{2l})\|\grad \phi_{1/2l}(\nu)\|^2. 
    \end{align*}
\end{lemma}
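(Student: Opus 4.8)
The plan is to run the Moreau-envelope argument for weakly concave functions (in the spirit of \cite{daskalakis2020independent,jin2020local}), transferring the gradient-domination hypothesis from $J_{\epsilon_\pi}$ to the inner value $\phi(\nu)=\min_\alpha J_{\epsilon_\pi}(\nu,\alpha)$. First I would establish that $\phi$ is $l$-weakly concave: since $J_{\epsilon_\pi}$ is $l$-smooth by Lemma \ref{lemma:three}, each map $\nu\mapsto J_{\epsilon_\pi}(\nu,\alpha)$ has $J_{\epsilon_\pi}(\cdot,\alpha)-\frac{l}{2}\|\cdot\|^2$ concave, and a pointwise minimum of such functions inherits the property, so $\phi-\frac{l}{2}\|\cdot\|^2$ is concave. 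With $\lambda=\frac{1}{2l}$ this makes the inner objective $w\mapsto\phi(w)-l\|w-\nu\|^2$ defining $\phi_{1/2l}$ strongly concave, so its maximizer $\hat\nu:=\arg\max_w\{\phi(w)-l\|w-\nu\|^2\}$ is unique and $\phi_{1/2l}$ is differentiable. I would then record the standard envelope identities $\nabla\phi_{1/2l}(\nu)=2l(\hat\nu-\nu)$, hence $\|\nabla\phi_{1/2l}(\nu)\|=2l\|\hat\nu-\nu\|$, together with $\phi_{1/2l}(\nu)=\phi(\hat\nu)-l\|\hat\nu-\nu\|^2\ge\phi(\nu)$ and $\max_w\phi_{1/2l}(w)=\max_{\nu^\star}\phi(\nu^\star)$.

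Next I would transfer the hypothesis to the proximal point $\hat\nu$. Choosing $\hat\alpha\in\arg\min_\alpha J_{\epsilon_\pi}(\hat\nu,\alpha)$, the pointwise bound $\phi\le J_{\epsilon_\pi}(\cdot,\hat\alpha)$ with equality at $\hat\nu$, combined with the $l$-weak concavity of $J_{\epsilon_\pi}(\cdot,\hat\alpha)$, shows that $g:=\nabla_\nu J_{\epsilon_\pi}(\hat\nu,\hat\alpha)$ is a supergradient of $\phi$ at $\hat\nu$. Applying the hypothesis at $(\hat\nu,\hat\alpha)$ then gives $\max_{\nu^\star}\phi(\nu^\star)-\phi(\hat\nu)\le u_\nu\max_{\bar\nu}\langle g,\bar\nu-\hat\nu\rangle$. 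The first-order optimality of $\hat\nu$ for the strongly concave envelope subproblem over the simplex product (the supergradient $g-2l(\hat\nu-\nu)$ must lie in the normal cone, i.e. $\langle g-2l(\hat\nu-\nu),\bar\nu-\hat\nu\rangle\le 0$ for all feasible $\bar\nu$) converts this Frank--Wolfe gap into $\langle g,\bar\nu-\hat\nu\rangle\le 2l\langle\hat\nu-\nu,\bar\nu-\hat\nu\rangle$.

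Finally I would decompose $\max_{\nu^\star}\phi(\nu^\star)-\phi(\nu)=[\max_{\nu^\star}\phi(\nu^\star)-\phi(\hat\nu)]+[\phi(\hat\nu)-\phi(\nu)]$, bound the first bracket by the transferred gradient-domination estimate above, and bound the second using weak concavity at $\nu$ together with the Lipschitz control $\|g\|\le L$ from Lemma \ref{lemma:three}. Re-expressing each piece through $\|\nabla\phi_{1/2l}(\nu)\|=2l\|\hat\nu-\nu\|$ and collecting the resulting coefficients is what produces the announced factor $u_\nu+\frac{L}{2l}$ multiplying $\|\nabla\phi_{1/2l}(\nu)\|^2$.

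The step I expect to be the main obstacle is exactly this last combination: both contributions must be shown to scale with $\|\nabla\phi_{1/2l}(\nu)\|^2$, i.e. \emph{quadratically} in $\|\hat\nu-\nu\|$, whereas a naive Cauchy--Schwarz bound on $\langle\hat\nu-\nu,\bar\nu-\hat\nu\rangle$ only yields a linear-in-$\|\hat\nu-\nu\|$ estimate. The quadratic scaling has to be extracted from the correlation between the supergradient $g$ and the proximal displacement $\hat\nu-\nu$ that the strong concavity of the envelope subproblem enforces, and the bookkeeping must be done with the explicit smoothness and Lipschitz constants $l,L$ of Lemma \ref{lemma:three} so that the squared-norm bound with constant $u_\nu+\frac{L}{2l}$ is the one that emerges; this careful passage from a first-order (Frank--Wolfe) gap to the squared envelope-gradient is the delicate core of the argument.
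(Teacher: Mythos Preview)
The paper does not actually prove this lemma: it is quoted verbatim as Lemma~12 of \cite{daskalakis2020independent} and used as a black box in the proof of Theorem~\ref{thm:max}. So there is no in-paper argument to compare against; your outline is essentially a reconstruction of the cited proof, and the overall architecture---weak concavity of $\phi$, uniqueness of the proximal point $\hat\nu$, the envelope identity $\nabla\phi_{1/2l}(\nu)=2l(\hat\nu-\nu)$, transferring gradient domination to $\hat\nu$ via a minimizing $\hat\alpha$, and the first-order optimality condition at $\hat\nu$---is exactly the standard route and is correct.

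Your flagged ``main obstacle'' is, however, a genuine issue with the \emph{statement} rather than with your plan. Carrying your own steps through gives
\[
\max_{\nu^\star}\phi(\nu^\star)-\phi(\hat\nu)\ \le\ u_\nu\cdot 2l\,\max_{\bar\nu}\langle \hat\nu-\nu,\bar\nu-\hat\nu\rangle\ \le\ u_\nu\cdot 2l\,D\,\|\hat\nu-\nu\|\ =\ u_\nu D\,\|\nabla\phi_{1/2l}(\nu)\|,
\]
and $\phi(\hat\nu)-\phi(\nu)\le L\|\hat\nu-\nu\|=\tfrac{L}{2l}\|\nabla\phi_{1/2l}(\nu)\|$, i.e.\ a bound that is \emph{linear} in $\|\nabla\phi_{1/2l}(\nu)\|$, not quadratic. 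This is precisely what the argument in \cite{daskalakis2020independent} yields; no ``correlation'' trick upgrades the Frank--Wolfe gap $\max_{\bar\nu}\langle\hat\nu-\nu,\bar\nu-\hat\nu\rangle$ to $\|\hat\nu-\nu\|^2$ over a bounded simplex product without extra structure. In short: your proof sketch is the right one, and the difficulty you anticipate is not a gap in your reasoning but an apparent transcription slip in the exponent on $\|\nabla\phi_{1/2l}(\nu)\|$ in the lemma as restated here. With the linear version in hand, the combination with Lemma~\ref{lemma:gd_max} proceeds via Jensen's inequality on $\tfrac{1}{T}\sum_t\|\nabla\phi_{1/2l}(\nu^t)\|$, which is consistent with the polynomial rate claimed in Theorem~\ref{thm:max}.
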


\begin{lemma}[Theorem 2a \citep{daskalakis2020independent}]
\label{lemma:tts_gda}
    Suppose $J_{\epsilon_\pi}$ is $l$-smooth and $L$-Lipschitz. If there is some $u_\nu$, $u_\alpha$ such that for any $\nu$, $\alpha$
    \begin{align*}
        &\max_{\nu^\star}J_{\epsilon_\pi}(\nu^\star, \alpha) - J_{\epsilon_\pi}(\nu, \alpha)\le u_\nu \max_{\Bar{\nu}}\langle \nabla_{\nu} J_{\epsilon_\pi}(\nu, \alpha),\Bar{\nu}-\nu \rangle,\\
        &J_{\epsilon_\pi}(\nu, \alpha) - \min_{\alpha^\star}J_{\epsilon_\pi}(\nu, \alpha^\star)\le u_\alpha \max_{\Bar{\alpha}}\langle \nabla_{\alpha} J_{\epsilon_\pi}(\nu, \alpha),\alpha-\Bar{\nu} \rangle.
    \end{align*}
    Then it holds that for $\eta_\alpha^t=\eta_\alpha=\Theta\left(\frac{{\delta}^4 u_\alpha^2}{l^3L^2(L / l+1)^2}\right)$, and $\eta_\nu^t=\eta_\nu=\Theta\left(\min\{\frac{\delta^8 u_\alpha^4}{l^5 (L / l+1)^4L^4}, \frac{\delta^2}{lL^2}\}\right)$, we have that
    \begin{align*}
        \frac{1}{T}\sum_{t=1}^T \max_{\nu^\star}\phi(\nu^\star)-\phi(\nu^t)\le \left({u_\nu} + \frac{L}{2l}\right)\delta,
    \end{align*}
    for $T \geq \Omega\left(\frac{\left(D_{\Pi_{\nu}}+D_{\Pi_{\alpha}}\right) L}{{\delta}^2 \eta_{\nu}}\right)$ iterations, where $D_{\Pi_{\nu}}$ and $D_{\Pi_{\alpha}}$ is the radius of the set $\Pi_\nu$, and $\Pi_{\alpha}$.
\end{lemma}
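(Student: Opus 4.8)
The plan is to reduce the claimed exploitability-gap bound to a near-stationarity statement about the Moreau envelope $\phi_{1/2l}$, and then to establish the latter through a coupled two-potential analysis that exploits the timescale separation $\eta_\nu \ll \eta_\alpha$. Since $J_{\epsilon_\pi}$ is $l$-smooth, the function $\phi(\nu):=\min_\alpha J_{\epsilon_\pi}(\nu, \alpha)$ is $l$-weakly concave, so its Moreau envelope $\phi_{1/2l}$ is well defined and $\|\nabla\phi_{1/2l}(\nu)\|$ is a meaningful measure of near-stationarity. The first step is purely a reduction: because the $\nu$-side hypothesis of this lemma is exactly the gradient-domination assumption required by Lemma \ref{lemma:moreau}, that lemma gives $\max_{\nu^\star}\phi(\nu^\star) - \phi(\nu^t) \le (u_\nu + \tfrac{L}{2l})\|\nabla\phi_{1/2l}(\nu^t)\|^2$ for each iterate. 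Averaging over $t$ shows it suffices to prove $\tfrac1T\sum_{t=1}^T\|\nabla\phi_{1/2l}(\nu^t)\|^2 \le \delta$, and everything after this is an argument for that averaged stationarity bound.

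To control the stationarity measure, I would introduce the inner suboptimality $e_t := J_{\epsilon_\pi}(\nu^t, \alpha^t) - \phi(\nu^t) \ge 0$ as a second Lyapunov quantity tracking how far the fast player is from its best response against the current $\nu^t$, and run two coupled recursions. For the fast player, the $\alpha$-side gradient-domination hypothesis is a variational-form PL inequality; combined with $l$-smoothness it yields linear contraction of $e_t$ under one projected gradient-descent step of size $\eta_\alpha$, namely $J_{\epsilon_\pi}(\nu^t, \alpha^{t+1}) - \phi(\nu^t) \le (1-\Theta(\eta_\alpha/u_\alpha))\,e_t + O(l L^2 \eta_\alpha^2)$. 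The subsequent slow move $\nu^t \to \nu^{t+1}$ perturbs both $J_{\epsilon_\pi}$ and $\phi$ by at most $2L\eta_\nu$ by Lipschitzness, giving the drift recursion $e_{t+1} \le (1-\Theta(\eta_\alpha/u_\alpha))\,e_t + O(L^2\eta_\nu) + O(l L^2\eta_\alpha^2)$, whose stationary level is $O\!\big(u_\alpha(L^2\eta_\nu + lL^2\eta_\alpha^2)/\eta_\alpha\big)$.

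For the slow player I would invoke the standard Moreau-envelope ascent inequality for weakly concave maximization, but with the \emph{biased} direction $g^t = \nabla_\nu J_{\epsilon_\pi}(\nu^t, \alpha^t)$ in place of $\nabla\phi(\nu^t)$. The bias is controlled by smoothness and the PL condition, $\|g^t - \nabla\phi(\nu^t)\| \le l\|\alpha^t - \alpha^\star(\nu^t)\| \le O(\sqrt{l\,e_t})$, which produces $\phi_{1/2l}(\nu^{t+1}) \ge \phi_{1/2l}(\nu^t) + \tfrac{\eta_\nu}{2}\|\nabla\phi_{1/2l}(\nu^t)\|^2 - O(\eta_\nu\sqrt{l\,e_t}\,L) - O(lL^2\eta_\nu^2)$. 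Telescoping this over $t$, bounding $\phi_{1/2l}(\nu^T) - \phi_{1/2l}(\nu^0) = O(L(D_{\Pi_\nu}+D_{\Pi_\alpha}))$ by the domain radius, substituting the stationary bound for $\sum_t \sqrt{e_t}$ from the first recursion, and dividing by $\eta_\nu T$ gives $\tfrac1T\sum_t\|\nabla\phi_{1/2l}(\nu^t)\|^2 \le O\!\big(\tfrac{L(D_{\Pi_\nu}+D_{\Pi_\alpha})}{\eta_\nu T}\big) + (\text{error terms in }\eta_\alpha,\eta_\nu)$. Choosing $\eta_\alpha = \Theta(\delta^4 u_\alpha^2/(l^3L^2(L/l+1)^2))$ and $\eta_\nu = \Theta(\min\{\delta^8 u_\alpha^4/(l^5(L/l+1)^4L^4),\, \delta^2/(lL^2)\})$ forces each error term below $\delta$, and $T = \Omega(L(D_{\Pi_\nu}+D_{\Pi_\alpha})/(\delta^2\eta_\nu))$ kills the leading term; combining with the Step-one reduction closes the proof.

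The main obstacle is the coupling in the third step: the slow player's progress is corrupted by a bias of order $\sqrt{e_t}$, while $e_t$ itself only shrinks at the fast rate $\Theta(\eta_\alpha/u_\alpha)$ and is continually refreshed by the slow player's own $O(L^2\eta_\nu)$ drift. Making the averaged bias negligible therefore demands that the stationary inner gap $O(u_\alpha L^2\eta_\nu/\eta_\alpha)$ be of order $\delta^2$ (so that $\sqrt{e_t}=O(\delta)$), which is exactly the quantitative origin of the separation $\eta_\nu\ll\eta_\alpha$ and of the asymmetric $\delta^8$-versus-$\delta^4$ scaling of the two step sizes. Verifying that this constraint is simultaneously compatible with the per-step discretization errors $O(lL^2\eta_\alpha^2)$ and $O(lL^2\eta_\nu^2)$ being at most $\delta$, and that the transient contribution of $e_0$ is absorbed by the $1/(\eta_\nu T)$ averaging, is the delicate accounting that the stated polynomial step-size and iteration-count choices are engineered to satisfy.
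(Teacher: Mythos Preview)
The paper does not supply its own proof of this lemma: it is quoted as Theorem~2a of \cite{daskalakis2020independent} and used as a black box in the proof of Theorem~\ref{thm:max}. There is therefore no in-paper argument to compare your attempt against.

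For what it is worth, your sketch follows the same overall architecture as the cited reference: reduce the exploitability gap to an averaged Moreau-envelope stationarity bound via Lemma~\ref{lemma:moreau}, and then establish the latter by a coupled two-potential analysis in which the inner suboptimality $e_t$ is contracted at the fast rate while the slow player performs biased ascent on $\phi_{1/2l}$. Two places where your write-up would need tightening to match an actual proof: (i) the step ``$\|g^t-\nabla\phi(\nu^t)\|\le l\|\alpha^t-\alpha^\star(\nu^t)\|\le O(\sqrt{l\,e_t})$'' presupposes differentiability of $\phi$ and a quadratic-growth (error-bound) property of the inner problem, neither of which is implied by the variational gradient-domination hypothesis alone; the analysis in \cite{daskalakis2020independent} sidesteps this by working with the proximal point of $\phi_{1/2l}$ and the gradient mapping rather than $\nabla\phi$; (ii) the claim that variational gradient dominance plus smoothness yields \emph{linear} contraction of $e_t$ under one projected step is stronger than what that hypothesis directly gives, and the cited paper uses a more careful descent-lemma computation to obtain the needed per-step decrease. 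These are technical rather than conceptual issues, and the step-size and iteration scalings you arrive at match the statement.
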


Now we are ready to prove Theorem \ref{thm:max}.
\begin{proof}
    We start from proving the convergence of Algorithm \ref{alg:max}. By substituting the parameters into Lemma \ref{lemma:gd_max}, we notice that the parameters satisfy that
    $L\le \frac{|\cA_\nu|+|\cA_\alpha|}{(1-\gamma)^2}$, $l\le \frac{2(|\cA_\nu|+|\cA_\alpha|)}{(1-\gamma)^3}$
     , $u_\nu = \frac{C_{\cG}^{\epsilon_\pi}}{1-\gamma}$ and $\phi(\nu)\le \frac{1}{1-\gamma}$.
    Therefore, we get
    \begin{align*}
        \frac{1}{T}\sum_{t=1}^T\|\grad \phi_{1/2l}(\nu^t)\|^2\le 2\cdot \frac{\max_{\nu}\phi(\nu) -\phi(\nu^0)+ lL^2}{\sqrt{T+1}}\le \frac{2}{(1-\gamma)\sqrt{T+1}} + \frac{4(|\cA_\nu| + |\cA_\alpha|)^2}{(1-\gamma)^7\sqrt{T+1}}.
    \end{align*}
    Now we use Lemma \ref{lemma:moreau}, and conclude that
    \begin{align*}
        \frac{1}{T}\sum_{t=1}^T \max_{\nu^\star}\phi(\nu^\star)-\phi(\nu^t)\le \frac{1}{\sqrt{T+1}}\operatorname{poly}(\frac{1}{1-\gamma}, |\cS|, |\cA_\nu|, |\cA_\alpha|, C_{\cG}^{\epsilon_\pi}).
    \end{align*}
    Therefore, for Algorithm \ref{alg:max} to achieve accuracy $\delta$, combined with the fact that $\phi(\cdot)=-\operatorname{Expl}(\cdot)$, one needs $T = \frac{1}{\delta^2}\operatorname{poly}(\frac{1}{1-\gamma}, |\cS|, |\cA_\alpha|, |\cA_\nu|, C_{\cG}^{\epsilon_\pi})$ total number of iterations, concluding the proof for Algorithm \ref{alg:max}.

The proof for Algorithm \ref{alg:tts} follows similar steps.
    We notice the instantiation of the parameters $L\le \frac{|\cA_\nu|+|\cA_\alpha|}{(1-\gamma)^2}$, $l\le \frac{2(|\cA_\nu|+|\cA_\alpha|)}{(1-\gamma)^3}$
     , $u_\nu , u_\alpha \le \frac{C_{\cG}^{\epsilon_\pi}}{1-\gamma}$, $D_{\Pi_{\nu}}$, $D_{\Pi_{\alpha}}\le \sqrt{|\cS|}$. Bt substituting the parameters into Lemma \ref{lemma:tts_gda} and combing it with the fact that $\phi(\cdot)=-\operatorname{Expl}(\cdot)$, we set total number of iterations $T = \operatorname{poly}(\frac{1}{\delta}, C_\cG^{\epsilon_\pi}, |\cS|, |\cA_\alpha|, |\cA_\nu|, \frac{1}{1-\gamma})$ to achieve accuracy $\delta$, concluding the proof for Algorithm \ref{alg:tts}.
\end{proof}

\newpage
\section{Experimental details}\label{app:exp}
\subsection{Introduction to the environment}
Kuhn Poker is a popular research game, which is extensive-form and zero-sum with discrete observation and action space. There exists an efficient min oracle with game tree search. For Robosumo competition, both agents are multi-leg robots and observe the position, velocity, and contact forces of joints in their body, and the position of their opponent’s joints, which is much more challenging due to the high dimensional observation and action space. 
\subsection{Feature importance in Robosumo environments}\label{app:feat_imp}
We show the important features for the Robosumo environment in Table \ref{tab:params}, \ref{tab:params1}, \ref{tab:params2}.
\begin{table}[hbt!]
\centering
\begin{tabular}{|c|c|c|c|} 
    \hline
    \multicolumn{1}{|c|}{Original} & \multicolumn{1}{|c|}{Local} & \multicolumn{1}{|c|}{Global} \\
    \hline
    38 & 32 & 31   \\
    \hline
    32 & 43 & 39   \\
    \hline
    33 & 38 & 29   \\
    \hline
    43 & 37 & 43   \\
    \hline
    41 & 31 & 41   \\
    \hline
    29 & 35 & 34   \\
    \hline
    35 & 42 & 42   \\
    \hline
    39 & 41 & 35   \\
    \hline
    42 & 40 & 37\\
    \hline
    26 & 28 & 44\\
     \hline
\end{tabular}
\caption{Feature Importance (Subset) in Robosumo Spider vs Spider.
"Original": Index of the most important features of the victim's observation while playing against normal opponents. 
"Local": Index of the most important features of the victim's observation while playing against the attacker trained via our constrained attack method.
"Global": Index of the most important features of the victim's observation while playing against the attacker trained via the unconstrained attack method as done in \citep{gleave2019adversarial}}.
\label{tab:params}
\end{table}

\begin{table}[hbt!]
\centering
\begin{tabular}{|c|c|c|c|} 
    \hline
    \multicolumn{1}{|c|}{Original} & \multicolumn{1}{|c|}{Local} & \multicolumn{1}{|c|}{Global} \\
    \hline
    25 & 25 & 23 \\
    \hline
    145  & 26 & 146 \\
    \hline
    26 & 145 &  145\\
    \hline
    146  & 24 & 25 \\
    \hline
    24  & 22 & 22 \\
    \hline
    142 & 146 & 21 \\
    \hline
    144  & 23  & 24\\
    \hline
    22  & 143 & 144 \\
    \hline
    143  & 27 & 142\\
    \hline
    23  & 142 & 143\\
     \hline
\end{tabular}
\caption{Feature Importance (Subset) in Robosumo Ant vs Ant.
"Original": Index of the most important features of the victim's observation while playing against normal opponents. 
"Local": Index of the most important features of the victim's observation while playing against the attacker trained via our constrained attack method.
"Global": Index of the most important features of the victim's observation while playing against the attacker trained via the unconstrained attack method as done in \citep{gleave2019adversarial}}.
\label{tab:params1}
\end{table}

\begin{table}[hbt!]
\centering
\begin{tabular}{|c|c|c|c|} 
    \hline
    \multicolumn{1}{|c|}{Original} & \multicolumn{1}{|c|}{Local} & \multicolumn{1}{|c|}{Global} \\
    \hline
    193 & 27 & 33   \\
    \hline
    197 & 35 & 25   \\
    \hline
    25 & 25 & 29   \\
    \hline
    35 & 197 & 27   \\
    \hline
    27 & 193 & 35   \\
    \hline
    29 & 189 & 31   \\
    \hline
    191 & 199 & 36   \\
    \hline
    199 & 34 & 34   \\
    \hline
    190 & 191 & 28\\
    \hline
    36 & 36 & 26\\
     \hline
\end{tabular}
\caption{Feature Importance (Subset) in Robosumo Bug vs Bug.
"Original": Index of the most important features of the victim's observation while playing against normal opponents. 
"Local": Index of the most important features of the victim's observation while playing against the attacker trained via our constrained attack method.
"Global": Index of the most important features of the victim's observation while playing against the attacker trained via the unconstrained attack method as done in \citep{gleave2019adversarial}}.
\label{tab:params2}
\end{table}
\subsection{Implementation details.}
\paragraph{Kuhn Poker.} For implementing our adversarial training approach, we adopt both advanced A2C and RPG policy gradient methods. The policy is parameterized by an MLP with a hidden layer size of $128$. We use a batch size of $16$ and SGD as the optimizer.

\paragraph{Robosumo competition.} We use PPO as our base policy optimization algorithm. We use a clip range of $0.2$ and $0.95$ for GAE. The number of hidden units for parameterization of the policy is $64$. For all adversarial training algorithms with both a single timescale and two timescales and the baseline algorithm self-play and fictitious-play, we train by $3e7$ timesteps with a batch size of $64$. For attack, we train for $2e7$ timesteps with a batch size of $2048$. We use Adam as the optimizer.
\subsection{Additional experimental results}\label{addi_res}
\paragraph{Trade-off between stealthiness and winning rate.}We have mentioned that there is a trade-off between stealthiness and the winning rate. To quantitatively show the trade-off, we show the comparison of our constrained attack and unconstrained one in Figure \ref{fig:local_attack}, where ${\epsilon_\pi}=\lambda$ in the figures.
\begin{figure*}[!htbp]
\centering
\includegraphics[width=0.9\textwidth]{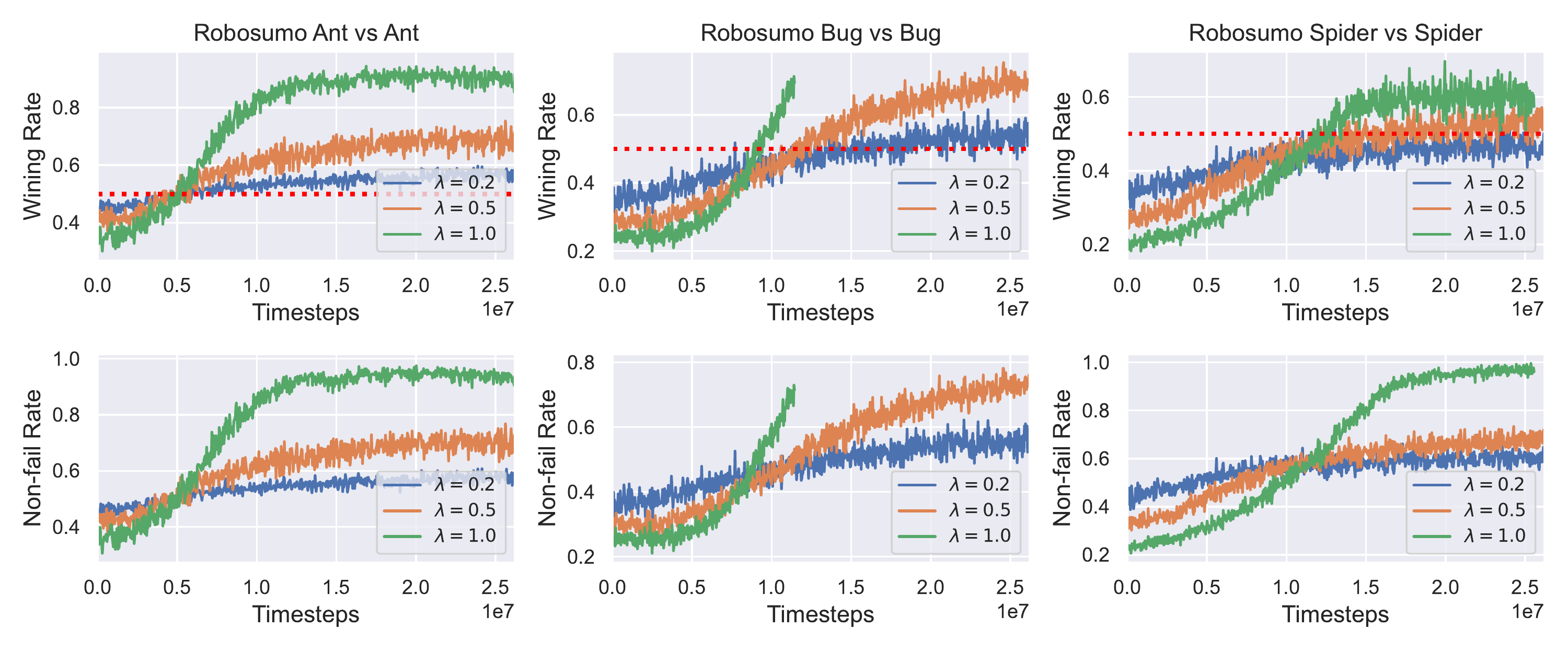}
\vspace{-2mm}
\caption
{Winning rate and Non-fail rate of different attack methods. }
\label{fig:local_attack}
\end{figure*}

\paragraph{Additional results on the state-distribution shifts.}To quantify the state-distribution shift induced due to the unconstrained (global) and constraint (local) attacks, in addition to the Wasserstein distance in Figure \ref{fig:wasserstein_plot}, we also compute the KL-Divergence and Hellinger distance between the state-distribution of the unattacked policy with the global and local attacks respectively. We observe that the state-distribution shift incurred is much lesser in local attacks than global, as also can be seen in Figure \ref{fig:feat_imp},\ref{fig:wasserstein_plot}, highlighting the stealthiness of our attack.

\begin{figure*}[!t]
  \centering
\begin{subfigure}{\textwidth}
\begin{center}
\includegraphics[width=.5\textwidth]{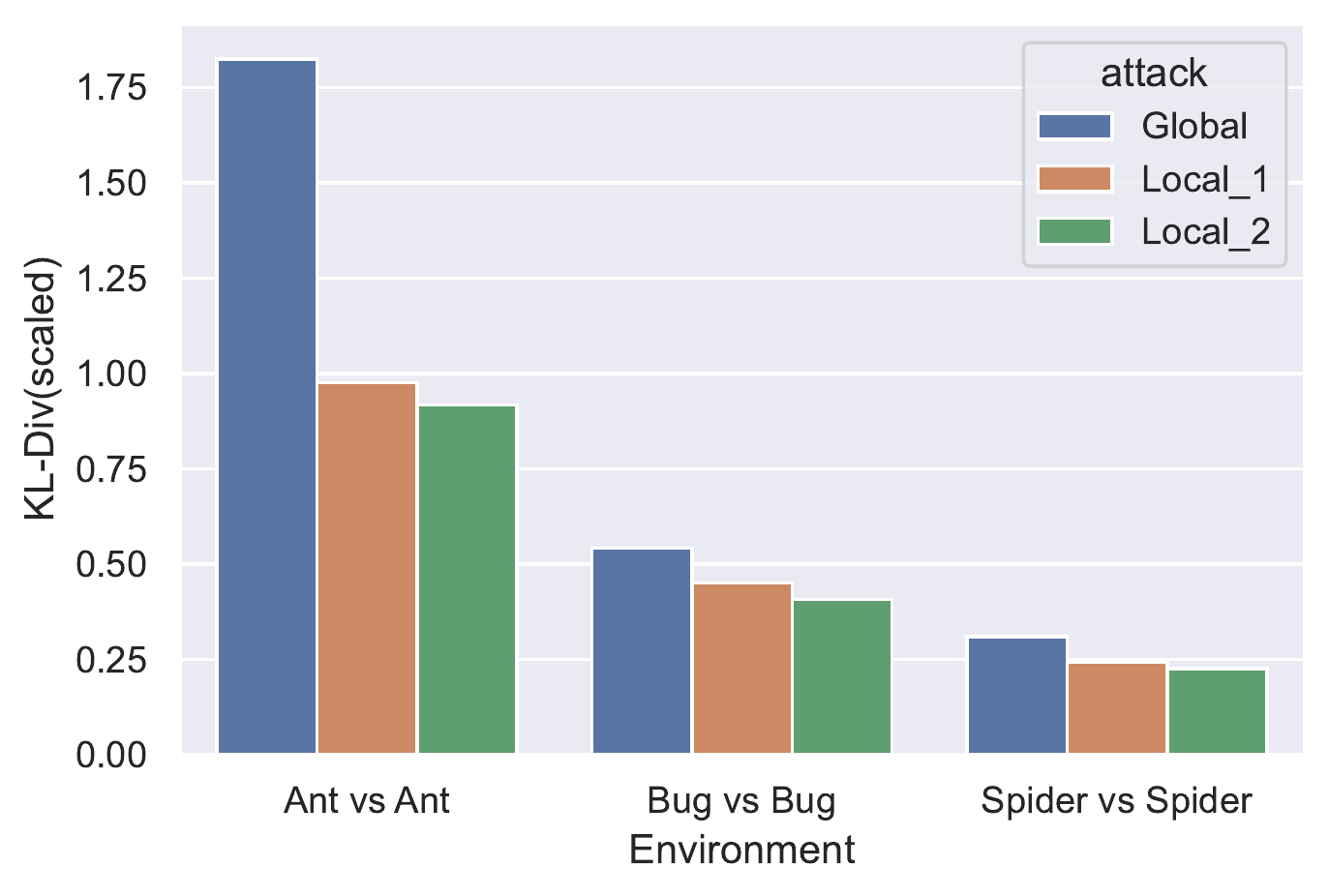}
\end{center}
\vspace{-1mm}
\caption{KL-Divergence(Scaled)}
\end{subfigure}
\begin{subfigure}{\textwidth}
\begin{center}
\includegraphics[width=.5\textwidth]{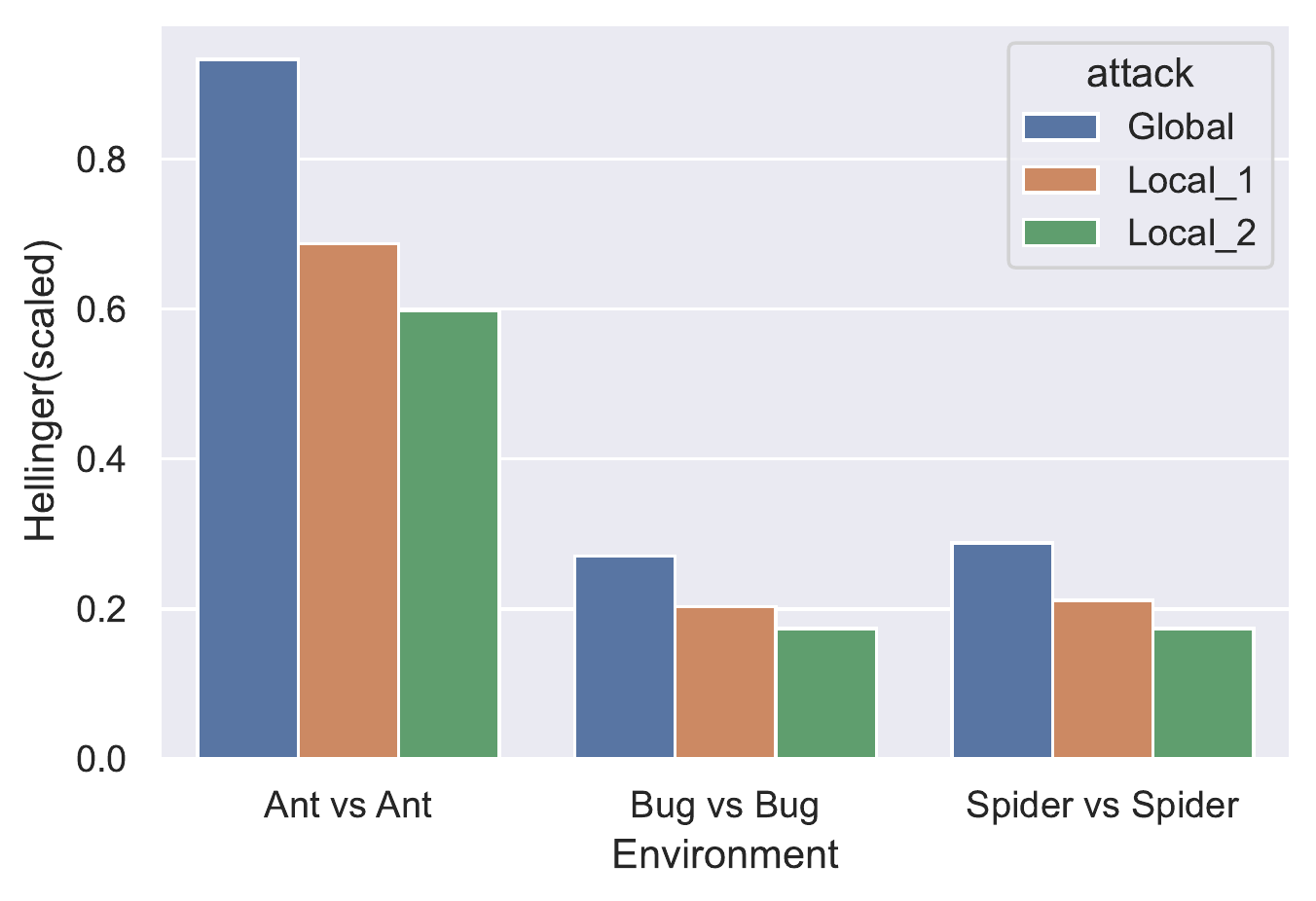}
\end{center}
\vspace{-2mm}
\caption{Hellinger Distance(Scaled)}
\end{subfigure}
\hfill
\caption{This figure compares the state-distribution shift w.r.t KL-Divergence(Scaled) and Hellinger Distance(scaled) incurred due to the Global (${\epsilon_\pi} = 1$), Local-1 (${\epsilon_\pi} = 0.7$),  Local-2(${\epsilon_\pi} = 0.3$) attacks in $3$ Robosumo environments. The plot clearly demonstrates the Stealthiness of constraint (local) attacks in preserving the state-distribution shifts and achieving stealthiness.}
\label{add_div}
\end{figure*}

\paragraph{Mismatch of attack budgets between training time and test time.} We show the attacker’s performance when there is a mismatch between the actual attack budget and the budget used to train the robust victim in the following table. It is clear that \textbf{even if the defense budget is not correctly specified, the victim policy still shows greatly improved robustness}, although a bit worse than the case when the defense budget is correctly specified.

\begin{table}[!ht]
\center
\begin{tabular}{|l|l|l|l|}
\hline
Attacker score       & Attack budget $0.3$ & Attack budget $0.7$ & Attack budget $1.0$ \\ \hline
Defense budget $0.3$ & \textbf{0.38}       & 0.50                & 0.71                \\ \hline
Defense budget $0.7$ & 0.41                & \textbf{0.43}       & 0.53                \\ \hline
Defense budget $1.0$ & 0.44                & 0.45                & \textbf{0.50}       \\ \hline
No defense           & 0.59                & 0.78                & 0.90                \\ \hline
\end{tabular}
\caption{Attacker’s score with different attack budgets against robust victim policies with different defense budgets and without defenses}
\end{table}

\end{document}